\documentclass[11pt]{article}
\usepackage{amsmath,amsthm,amssymb}

\usepackage[a4paper,left=3.5cm,right=3.3cm,top=3.2cm,bottom=3cm]{geometry}

\usepackage{algorithm,graphicx}
\usepackage{mathrsfs}
\usepackage{xcolor}

\numberwithin{equation}{section}
\newtheorem{thm}{Theorem}[section]

\newtheorem{cor}[thm]{Corollary}
\theoremstyle{definition}

\newtheorem{exm}{Example}[section]
\theoremstyle{remark}
\newtheorem{rmk}{Remark}[section]

\usepackage[fleqn,tbtags]{mathtools}
\usepackage{natbib}

\newcommand\blfootnote[1]{%
  \begingroup
  \renewcommand\thefootnote{}\footnote{#1}%
  \addtocounter{footnote}{-1}%
  \endgroup
}

\allowdisplaybreaks

\title{Regret Bounds for Lifelong Learning}
\author{Pierre Alquier$^1$ \and The Tien Mai$^1$ \and Massimiliano Pontil$^2$
\blfootnote{The authors gratefully acknowledge financial support from Labex ECODEC (ANR - 11-LABEX-0047).
Pierre Alquier gratefully acnkowledges financial support from the research
programme {\it New Challenges for New Data} from LCL and GENES, hosted by
the {\it Fondation du Risque}}}
\date{%
    $^1$CREST, ENSAE, Universit\'e Paris Saclay\\%
    $^2$Istituto Italiano di Tecnologia  and  University College London\\[2ex]%
    \today
}

\begin{document}

\maketitle

\begin{abstract}
We consider the problem of transfer learning in an online setting. Different tasks are presented sequentially 
and processed by a within-task algorithm. We propose a lifelong learning strategy which refines the underlying 
data representation used by the within-task algorithm, thereby transferring information from one task to the next. 
We show that when the within-task algorithm comes with some regret bound, our strategy inherits this good property. 
Our bounds are in expectation for a general loss function, and uniform for a convex loss. We discuss applications to dictionary learning and finite set of predictors. In the latter case, we improve previous $O(1/\sqrt{m})$ bounds 
to $O(1/m)$ where $m$ is the per task sample size.
\end{abstract}

\section{INTRODUCTION}

Most analyses of learning algorithms assume that the algorithm starts learning from scratch 
when presented with a new dataset. However, in real life, it is often the case that we will use the 
same algorithm on many different tasks, and that information should be transferred from one task to
another. For example, a key problem in pattern recognition is to learn a dictionary of features helpful
for image classification: it makes perfectly sense to assume that features learnt to classify dogs against other 
animals can be re-used to recognize cats. This idea is at the core of {\it transfer learning}, see~\citep{thrun,Balcan,Baxter97,Baxter2000,Cavallanti,Maurer05,maurer2013sparse,pentina2014pac,MPR2016} and references therein. 

The setting in which the tasks are presented simultaneously is often referred to as {\em learning-to-learn} \citep{Baxter2000}, whereas when the tasks are presented sequentially, the term {\it lifelong learning} is often used~\citep{thrun96}. In either case, a huge improvement over ``learning in isolation'' can be expected, 
especially when the sample size per task is relatively small. We will use the above terminologies in the paper.

Although a substantial amount of work has been done on the theoretical study of learning-to-learn \citep{Baxter2000,Maurer05,pentina2014pac,MPR2016}, up to our knowledge there 
is no analysis of the statistical performance of lifelong learning algorithms. \citet{ruvolo2013ella} 
studied the convergence of certain optimization algorithms for lifelong learning. However, no statistical 
guarantees are provided. Furthermore, in all the aforementioned works, the authors propose a technique for transfer learning which constrains the within-task algorithm to be of a certain kind, e.g.~regularized empirical risk minimization.

The main goal of this paper is to show that it is possible to perform a theoretical analysis of lifelong learning 
with minimal assumptions on the form of the within-task algorithm. Given a learner with her/his own favourite 
algorithm(s) for learning within tasks, we propose a meta-algorithm for transferring information
from one task to the next. The algorithm maintains a prior distribution on the set of representations, which is updated after the encounter of each new task using the exponentially weighted aggregation (EWA) procedure, hence we call it {\it EWA for lifelong learning} or EWA-LL.

A standard way to provide theoretical guarantees for online algorithms are regret bounds, which measure the discrepancy between the prediction error of the forecaster and the error of an ideal predictor. We prove that, as long as the within-task algorithms have good statistical properties, EWA-LL inherits these properties. Specifically in Theorem \ref{thm:online:w:online} we present regret bounds for EWA-LL, in which the regret bounds for the within-tasks algorithms are combined into a regret bound for the meta-algorithm.

We also show, using an online-to-batch analysis, that it is possible to derive a strategy for learning-to-learn, 
and provide risk bounds for this strategy. The bounds are generally in the order of $1/\sqrt{T}+1/\sqrt{m}$, 
where $T$ is the number of tasks and $m$ is the sample size per task. Moreover, we derive in some specific
situations rates in $1/\sqrt{T}+1/m$. These rates are novel up to our knowledge and justify the use of transfer 
learning with very small sample sizes $m$.

The paper is organized as follows. In Section~\ref{notations} we introduce the lifelong learning problem. 
In Section~\ref{EWA-LL} we present the EWA-LL algorithm and provide a bound on its expected regret. 
This bound is very general, but might be uneasy to understand at first sight. So, in Section~\ref{applications} 
we present more explicit versions of our bound in two classical examples: finite set of predictors and dictionary 
learning. We also provide a short simulation study for dictionary learning. At this point, we hope that the 
reader will have a clear overview of the problem under study. The rest of the paper is devoted to theoretical 
refinements: in online learning, uniform bounds are the norm rather than bounds in expectations~\citep{cesa2006prediction}. In Section~\ref{uniform} we establish such bounds for EWA-LL. 
Section~\ref{online-to-batch} provides an online-to-batch analysis that allows one to use a modification of 
EWA-LL for learning-to-learn. The supplementary material include proofs (Appendix~\ref{proofs}),  
improvements for dictionary learning (Appendix~\ref{appendix:improvement}) and 
extended results (Appendix~\ref{appendix-bwo}).


\section{PROBLEM}
\label{notations}
In this section, we introduce our notation and present the lifelong learning problem.

Let $\mathcal{X}$ and $\mathcal{Y}$ be some sets. A predictor is a function $f:\mathcal{X}\rightarrow \mathcal{Y}$, where $\mathcal{Y} = \mathbb{R}$ for regression and $\mathcal{Y}=\{-1,1\}$ for binary classification. 
The loss of a predictor $f$ on a pair $(x,y)$ is a real number denoted by $\ell(f(x),y)$.
As mentioned above, we want to transfer the information (a common data representation) gained from the 
previous tasks to a new one. Formally, we let $\mathcal{Z}$ be a set and prescribe a set $\mathcal{G}$ of 
feature maps (also called {\em representations}) $g:\mathcal{X}\rightarrow \mathcal{Z}$, and a set $\mathcal{H}$ of functions $h:\mathcal{Z}\rightarrow \mathbb{R}$. We shall design an algorithm that is useful when
there is a function $g\in\mathcal{G}$, common to all the tasks, and task-specific functions $h_1,\dots,h_T$ such that 
\[
f_t= h_t \circ g
\]
is a good predictor for task $t$, in the sense that the corresponding prediction error (see below) is small.


We are now ready to describe the learning problem. We assume that tasks are dealt with sequentially. Furthermore, we assume that each task dataset is itself revealed sequentially and refer to this setting as {\em online-within-online} lifelong learning. Specifically, at each time step $t\in\{1,\dots,T\}$, 
the learner is challenged with a task, corresponding to a dataset
$$ \mathcal{S}_t = 
\big((x_{t,1},y_{t,1}),\dots,(x_{t,m_t},y_{t,m_t})\big) \in 
(\mathcal{X} \times \mathcal{Y})^{m_t}$$
where $m_t \in {\mathbb N}$. The dataset $\mathcal{S}_t$ is itself revealed sequentially, that is, at each inner step $i \in \{1,\dots,m_t\}$:
\begin{itemize}
 \item The object $x_{t,i}$ is revealed,
 \item The learner has to predict $y_{t,i}$, let
 $\hat{y}_{t,i} $ denote the prediction,
 \item The label $y_{t,i}$ is revealed, and the learner incurs the loss $\hat{\ell}_{t,i} := \ell(\hat{y}_{t,i},y_{t,i})$.
\end{itemize}
The task $t$ ends at time $m_t$, at which point the prediction error is 
\begin{equation}
\frac{1}{m_t} \sum_{i=1}^{m_t}\hat{\ell}_{t,i}.
\label{eq:mmm}
\end{equation}
This process is repeated for each task $t$, so that at the end of all the tasks, the average error is
$$
\frac{1}{T} \sum_{t=1}^T \frac{1}{m_t} \sum_{i=1}^{m_t}\hat{\ell}_{t,i}.
$$
Ideally, if for a given representation $g$, the best predictor $h_t$ for task $t$ was known in advance, then an ideal learner using $h_t \circ g$ for prediction would incur the error
\begin{equation}
\inf_{h_t \in\mathcal{H}} \frac{1}{m_t} \sum_{i=1}^{m_t}
   \ell\big(h_t\circ g (x_{t,i}),y_{t,i}\big).
\label{eq:mmm2}
\end{equation}
Hence, we define the within-task-regret of the representation $g$ on task $t$ as the difference between the prediction error \eqref{eq:mmm} and the smallest prediction error \eqref{eq:mmm2}, $$
\mathcal{R}_t(g)
=
\frac{1}{m_t}\sum_{i=1}^{m_t}\hat{\ell}_{t,i}
- \inf_{h_t \in\mathcal{H}}\frac{1}{m_t} \sum_{i=1}^{m_t}
   \ell\big(h_t\circ g (x_{t,i}),y_{t,i}
   \big).
$$
The above expression is slightly different from the usual notion of regret \cite{cesa2006prediction}, which does not contain the factor $1/m_t$.
This normalization is important in that it allows us to give equal weigths to different tasks.
 
Note that an oracle who would have known the best common representation $g$ for all tasks in advance would have only suffered, on the entire sequence of datasets, the error
$$
\inf_{g\in\mathcal{G}}
\frac{1}{T}
\sum_{t=1}^T
\inf_{h_t \in\mathcal{H}}
\frac{1}{m_t}
\sum_{i=1}^{m_t}
\ell\big(h_t\circ g (x_{t,i}),y_{t,i}\big).
$$

We are now ready to state our principal objective: we wish to design a procedure (meta-algorithm) that,
at the beginning of each task $t$, produces a function $\hat{g}_t$ so that,
within each task, the learner can use its own favorite online learning algorithm
to solve task $t$ on the sequence $\big((\hat{g}_t(x_{t,1}),y_{t,1}),\dots,(\hat{g}_t(x_{t,m_t}),y_{t,m_t})\big)$.  
We wish to control the {\em compound regret} of our procedure
\begin{equation*}
\mathcal{R} :=
\frac{1}{T}\sum_{t=1}^T \frac{1}{m_t} \sum_{i=1}^{m_t}\hat{\ell}_{t,i}
-
\inf_{g\in\mathcal{G}}
\frac{1}{T}\sum_{t=1}^T
\inf_{h_t \in\mathcal{H}} \frac{1}{m_t} \sum_{i=1}^{m_t}
   \ell\big(h_t\circ g (x_{t,i}),y_{t,i}\big)
\end{equation*}
which may succinctly be written as $\sup_{g\in\mathcal{G}} \big\{\frac{1}{T}\sum_{t=1}^T \mathcal{R}_t(g)\big\}$. 
This objective is accomplished in Section \ref{EWA-LL} under the assumption that a regret bound for the 
within-task-algorithm is available.

We end this section with two examples included in the framework.
\begin{exm}
[Dictionary learning]
\label{exm-dic}
Set $\mathcal{Z}=\mathbb{R}^K$, and call $g=(g_1,\dots,g_K)$ a {\em dictionary}, where each $g_k$ is a real-valued function on ${\cal X}$. Furthermore choose 
${\cal H}$ to be a set of linear functions on $\mathbb{R}^K$, so that, for each task $t$
$$
h_t \circ g (x) = \sum_{k=1}^K \theta^{(t)}_k g_k(x). 
$$
In practice depending on the value of $K$, we can use
least square estimators or LASSO to learn $\theta^{(t)}$. In~\citep{maurer2013sparse,ruvolo2013ella}, 
the authors consider $\mathcal{X}=\mathbb{R}^d$ and $g(x) = Dx$ for some
$d\times K$ matrix $D$, and the goal is to learn jointly the predictors $\theta^{(t)}$ and the dictionary $D$.
\end{exm}

\begin{exm}
[Finite set $\mathcal{G}$]
\label{exm-rel}
We choose $\mathcal{G}=\{g_1,\dots,g_K\}$ and ${\cal H}$ any set. While this example is interesting in its own right, it is also 
instrumental in studying the continuous case via a suitable discretization process. 
A similar choice has been considered by \cite{Crammer} in the multitask setting, in which the goal is to bound 
the average error on a prescribed set of tasks.  


\end{exm}

We notice that a slightly different learning setting is obtained when each dataset $ \mathcal{S}_t$ is given all at once. We refer to this as {\em batch-within-online} lifelong learning; this setting is briefly considered 
in Appendix~\ref{appendix-bwo}. On the other hand when all datasets are revealed all at once, we are in the well-known  setting 
of {\em learning-to-learn} \citep{Baxter2000}. In Section \ref{online-to-batch}, we explain how our lifelong learning analysis 
can be adapted to this setting.


\begin{algorithm}
\caption{EWA-LL}
\begin{description}
\item[Data] A sequence of datasets
\\ $ \mathcal{S}_t = \big((x_{t,1},y_{t,1}),\dots,(x_{t,m_t},y_{t,m_t})\big) $,
$1\leq t\leq T$. associated with different learning tasks; the points within each dataset
are also given sequentially.
\item[Input] A prior $\pi_1$, a
learning parameter $\eta>0$ and a learning algorithm for each task $t$ which,
for any representation $g$ returns a
sequence of predictions $\hat{y}_{t,i}^g$ and suffers a loss
$$ \hat{L}_t(g) := \frac{1}{m_t} \sum_{i=1}^{m_t}
       \ell\left(\hat{y}_{t,i}^g,y_{t,i}\right). $$
\item[Loop] For $t=1,\dots,T$
\begin{description}
\item[i] Draw $\hat{g}_t \sim \pi_t$.
\item[ii] Run the within-task learning algorithm on $ \mathcal{S}_t $ and suffer
loss $ \hat{L}_t(\hat{g}_t)$.
\item[iii] Update
 $$
 \pi_{t+1}({\rm d}g) := \frac{\exp(-\eta \hat{L}_t(g))
                  \pi_{t}({\rm d}g) }{\int \exp(-\eta \hat{L}_t(\gamma))
                  \pi_{t}({\rm d}\gamma) }.
 $$
\end{description}
\end{description}
\label{Al: main}
\end{algorithm}

\section{ALGORITHM}
\label{EWA-LL}

In this section, we present our lifelong learning algorithm, derive its regret bound and then specify it to two popular within-task online algorithms.

\subsection{EWA-LL Algorithm}

Our EWA-LL algorithm is outlined in Algorithm \ref{Al: main}. The algorithm is based on the
exponentially weighted aggregation procedure \citep[see e.g.][and references therein]{cesa2006prediction}, and updates a probability distribution $\pi_t$ on the set of representation ${\cal G}
$ before the encounter of task $t$. The effect of Step {\bf iii} is that any representation $g$ which does not perform well on task $t$, is less likely to be reused on the next task. We insist on the fact that this procedure allows the user to freely choose the within-task algorithm, which does not even need to be the same for each task.

\subsection{Bounding the Expected Regret}

Since Algorithm \ref{Al: main} involves a randomization strategy, we can only get a bound on the
expected regret, the expectation being with respect to the
drawing of the function ${\hat g}_t$ at step {\bf i} in the algorithm. Let $\mathbb{E}_{g \sim \pi}[F(g)]$ denote
the expectation of $F(g)$ when $g\sim \pi$. Note that the expected
overall-average loss that we want to upper bound is then
$$
\frac{1}{T} \sum_{t=1}^T \mathbb{E}_{{\hat g}_t \sim \pi_t}[\hat{L}_t({\hat g}_t)].
$$

\begin{thm}
\label{thm:online:w:online}
If, for any $g \in {\cal G}$, $\hat{L}_t(g) \in [0,C]$ and the within-task algorithm has a regret bound $\mathcal{R}_t(g) \leq
 \beta(g,m_t)$, then
 \begin{multline*}
 \frac{1}{T} \sum_{t=1}^T \mathbb{E}_{{\hat g}_t \sim \pi_t}\left[
 \frac{1}{m_t}
 \sum_{i=1}^{m_t} \hat{\ell}_{t,i}
 \right]
 \leq
 \inf_{\rho} \Biggl\{
       \mathbb{E}_{g \sim \rho}\Biggl[
        \frac{1}{T}
       \sum_{t=1}^T
       \inf_{h_t\in\mathcal{H}}
       \frac{1}{m_t}
       \sum_{i=1}^{m_t}
        \ell\big(h_t\circ g(x_{t,i}),y_{t,i}\big)
      \\
      +  \frac{1}{T} \sum_{t=1}^T \beta(g,m_t)
       \Biggr]
     + \frac{\eta C^2}{8} + \frac{\mathcal{K}(\rho,\pi_1)}{\eta T}
 \Biggr\},
 \end{multline*}
 where the infimum is taken over all probability measures $\rho$ and $\mathcal{K}(\rho,\pi_1)$ is the Kullback-Leibler divergence between
$\rho$ and $\pi_1$.
\end{thm}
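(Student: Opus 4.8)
The plan is to combine a standard exponentially weighted aggregation (EWA) regret bound applied at the \emph{meta} level (across tasks) with the within-task regret bounds assumed in the hypothesis. The key observation is that Algorithm~\ref{Al: main} is precisely EWA run on the sequence of loss functions $g \mapsto \hat{L}_t(g)$, where each $\hat{L}_t(g) \in [0,C]$ plays the role of the per-round loss in a standard online prediction problem over the ``experts'' indexed by $g \in \mathcal{G}$. So I would first isolate the purely meta-level statement: for the EWA procedure updating $\pi_t$ as in Step~\textbf{iii}, and for any competitor distribution $\rho$,
\begin{equation*}
\frac{1}{T}\sum_{t=1}^T \mathbb{E}_{\hat{g}_t \sim \pi_t}\!\left[\hat{L}_t(\hat{g}_t)\right]
\leq
\inf_{\rho}\left\{ \mathbb{E}_{g\sim\rho}\!\left[\frac{1}{T}\sum_{t=1}^T \hat{L}_t(g)\right]
+ \frac{\eta C^2}{8} + \frac{\mathcal{K}(\rho,\pi_1)}{\eta T} \right\}.
\end{equation*}

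To establish this meta-level inequality I would follow the classical EWA analysis. First I would introduce the cumulative normalization $W_t = \int \exp\!\left(-\eta \sum_{s<t}\hat{L}_s(g)\right)\pi_1(\mathrm{d}g)$ and track $\log(W_{t+1}/W_t)$. Using Hoeffding's lemma on the bounded random variable $\hat{L}_t(g) \in [0,C]$ (with $g \sim \pi_t$) gives the per-round control
\begin{equation*}
-\frac{1}{\eta}\log \mathbb{E}_{g\sim\pi_t}\!\left[e^{-\eta \hat{L}_t(g)}\right]
\geq \mathbb{E}_{g\sim\pi_t}\!\left[\hat{L}_t(g)\right] - \frac{\eta C^2}{8},
\end{equation*}
which is where the $\eta C^2/8$ term is born. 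Telescoping over $t=1,\dots,T$ bounds $\sum_t \mathbb{E}_{\pi_t}[\hat{L}_t]$ by $-\frac{1}{\eta}\log W_{T+1}$ plus $T\eta C^2/8$. Then I would lower-bound $-\frac{1}{\eta}\log W_{T+1}$ by a variational (Donsker--Varadhan / Gibbs) argument: for any $\rho$,
\begin{equation*}
-\frac{1}{\eta}\log \int e^{-\eta \sum_t \hat{L}_t(g)}\pi_1(\mathrm{d}g)
\leq \mathbb{E}_{g\sim\rho}\!\left[\sum_t \hat{L}_t(g)\right] + \frac{\mathcal{K}(\rho,\pi_1)}{\eta},
\end{equation*}
which produces the $\mathcal{K}(\rho,\pi_1)/(\eta T)$ term after dividing by $T$. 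Taking the infimum over $\rho$ completes the meta-level bound.

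The final step is to plug in the within-task guarantee. By definition $\hat{L}_t(g) = \frac{1}{m_t}\sum_i \hat{\ell}_{t,i}^{\,g}$ and the assumed regret bound $\mathcal{R}_t(g)\le\beta(g,m_t)$ rearranges to
\begin{equation*}
\hat{L}_t(g) \leq \inf_{h_t\in\mathcal{H}}\frac{1}{m_t}\sum_{i=1}^{m_t}\ell\big(h_t\circ g(x_{t,i}),y_{t,i}\big) + \beta(g,m_t).
\end{equation*}
Substituting this into $\mathbb{E}_{g\sim\rho}[\frac{1}{T}\sum_t \hat{L}_t(g)]$ and noting that the loss the algorithm actually suffers at Step~\textbf{ii} is $\hat{L}_t(\hat{g}_t)=\frac{1}{m_t}\sum_i\hat{\ell}_{t,i}$, I recover exactly the right-hand side claimed in Theorem~\ref{thm:online:w:online}.

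I expect the main conceptual obstacle to be keeping the two layers of averaging/expectation cleanly separated: the meta-level EWA operates on the tasks viewed as single rounds with loss $\hat{L}_t$, while $\hat{L}_t$ itself already encapsulates the within-task averaging and the within-task online dynamics. One must be careful that the within-task regret bound $\beta(g,m_t)$ holds \emph{for every fixed} $g$ (including the randomly drawn $\hat{g}_t$, which is handled by the meta-level expectation over $\pi_t$), and that the boundedness $\hat{L}_t(g)\in[0,C]$ used in Hoeffding's lemma is uniform in $g$. A minor technical point is that $\hat{L}_t(g)$ depends only on data revealed up to the end of task $t$, so $\pi_t$ is a legitimate deterministic function of the past, making the telescoping and the application of Hoeffding's lemma (conditionally, over the draw $g\sim\pi_t$) valid. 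Everything else is routine manipulation of the EWA bound.
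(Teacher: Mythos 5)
Your proposal is correct and follows essentially the same route as the paper's proof: the meta-level EWA bound via the normalization $W_t$, Hoeffding's lemma yielding the $\eta C^2/8$ term, telescoping, and the Donsker--Varadhan/Gibbs variational identity (the paper cites Catoni's Equation (5.2.1)) to produce the $\mathcal{K}(\rho,\pi_1)/(\eta T)$ term, followed by substitution of the within-task regret bound into $\hat{L}_t(g)$. No substantive differences.
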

The proof is given in Appendix~\ref{proofs}. 
Some comments are in order as the bound in Theorem~\ref{thm:online:w:online}
might not be easy to read. First, similar to standard analyses in online learning, the parameter $\eta$ is a decreasing function of $T$, hence the bound vanishes as $T$ grows. Second, corollaries are derived
in Section~\ref{applications} that are easier to read, as they are more
similar to usual regret inequalities \citep{cesa2006prediction}, that is, the right hand side of the bound is of the form
\begin{equation}
\label{oracle-type}
\hspace{-.2truecm}\inf_{g\in\mathcal{G}}
  \frac{1}{T}
       \hspace{-.03truecm}\sum_{t=1}^T
       \hspace{-.03truecm} \inf_{h_t\in\mathcal{H}}\hspace{-.03truecm}
       \frac{1}{m_t}\hspace{-.03truecm}
       \sum_{i=1}^{m_t}
        \ell\big(h_t\circ g(x_{t,i}),y_{t,i}\big) + \text{``rate''}.
\end{equation}
The bound in Theorem \ref{Al: main} looks slightly different, but is quite similar in spirit.
Indeed, instead of an infimum with respect to $g$ we have an infimum on all
the possible aggregations with respect to $g$,
\begin{equation*}
 \inf_{\rho}
       \mathbb{E}_{g \sim \rho}
        \frac{1}{T}
       \sum_{t=1}^T
       \inf_{h_t\in\mathcal{H}}
       \frac{1}{m_t}
       \sum_{i=1}^{m_t}
        \ell\big(h_t\circ g(x_{t,i}),y_{t,i}\big)
+ \text{``remainder''}
\end{equation*}
where the remainder term depends on $\mathcal{K}(\rho,\pi_1)$. In order to look
like~\eqref{oracle-type}, we could consider a measure $\rho$ highly concentrated
around the representation $g$ minimizing~\eqref{oracle-type}. When $\mathcal{G}$ is finite, this is a reasonable strategy and the bound is
given explicitly in Section~\ref{subsection-finite} below. 
However, in some situations, this would cause the term
$\mathcal{K}(\rho,\pi_1)$ to diverge. 
Studying accurately the minimizer in $\rho$ usually leads to an 
interesting regret bound, and this is exactly what is done in Section \ref{applications}.

Finally note that the bound in Theorem~\ref{thm:online:w:online} is given
in expectation. In online learning, uniform bounds are usually
prefered~\citep{cesa2006prediction}. In Section~\ref{uniform} we show that
it is possible to derive such bounds under additional assumptions.

\subsection{Examples of Within Task Algorithms}

We now specify the general bound in Theorem \ref{Al: main} to two popular online algorithms which we use within tasks.
\subsubsection{Online Gradient}
\label{exm:grad}

The first algorithm assumes that $\mathcal{H}$ is a parametric family of functions
$\mathcal{H}=\{h_\theta,\theta \in\mathbb{R}^p, \|\theta\|\leq B \}$, and 
for any $(x,y,g)$, $\theta\mapsto \ell(h_{\theta}\circ g(x),y)$
is convex, $L$-Lipschitz, upper bounded by $C$ and denote by $\nabla_\theta$ a subgradient. 
\begin{algorithm}[t]
\caption{OGA}
\label{OGA}
\begin{description}
\item[Data] A task
$ \mathcal{S}_t = \big((x_{t,1},y_{t,1}),\dots,(x_{t,m_t},y_{t,m_t})\big)$.
\item[Input] Stepsize $\zeta>0$, and $\theta_1 = 0$.
\item[Loop] For $i=1,\dots,m_t$,
\begin{description}
\item[i] Predict $\hat{y}^g_{t,i}= h_{\theta_{i}}\circ g (x_{t,i})$,
\item[ii] $y_{t,i}$ is revealed, update 
\\$\theta_{i+1}=\theta_i-\zeta \nabla_\theta
 \ell \big(h_{\theta}\circ g (x_{t,i}),y_{t,i}\big) \big|_{\theta=\theta_i} $.
\end{description}
\end{description}
\end{algorithm}

\begin{cor}
\label{cor:1}
The EWA-LL algorithm using the OGA within task with step size $\zeta=\frac{B}{L\sqrt{2m_t}}$ satisfies
\begin{multline*}
 \frac{1}{T} \sum_{t=1}^T \mathbb{E}_{{\hat g}_t \sim \pi_t}\left[
 \frac{1}{m_t}
 \sum_{i=1}^{m_t} \hat{\ell}_{t,i}
 \right]
 \leq
 \inf_{\rho} \Biggl\{
       \mathbb{E}_{g \sim \rho}\Biggl[
        \frac{1}{T}
       \sum_{t=1}^T
       \inf_{h_t\in\mathcal{H}}
       \frac{1}{m_t}
       \sum_{i=1}^{m_t}
        \ell(h_t\circ g(x_{t,i}),y_{t,i})
      \\
      +  \frac{BL}{T} \sum_{t=1}^T \sqrt{\frac{2}{m_t}}
       \Biggr]
     + \frac{\eta C^2}{8} + \frac{\mathcal{K}(\rho,\pi_1)}{\eta T}
 \Biggr\}.
 \end{multline*}
\end{cor}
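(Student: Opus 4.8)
The plan is to read Corollary~\ref{cor:1} as a direct instantiation of Theorem~\ref{thm:online:w:online}: the only new ingredient needed is a within-task regret bound for Algorithm~\ref{OGA} that is \emph{uniform} over $g\in\mathcal{G}$. Concretely, I would show that, for \emph{every} representation $g$, running OGA with step size $\zeta=B/(L\sqrt{2m_t})$ yields $\mathcal{R}_t(g)\le BL\sqrt{2/m_t}$, and then take $\beta(g,m_t)=BL\sqrt{2/m_t}$ in Theorem~\ref{thm:online:w:online}. Since this choice of $\beta$ does not depend on $g$, the term $\frac{1}{T}\sum_{t}\beta(g,m_t)=\frac{BL}{T}\sum_t\sqrt{2/m_t}$ is constant in $g$, so it passes unchanged through $\mathbb{E}_{g\sim\rho}$ and the bound of the theorem collapses verbatim to the claimed inequality. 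I would also record that the standing assumption ``$\ell$ upper bounded by $C$'' guarantees $\hat{L}_t(g)\in[0,C]$, so the hypotheses of Theorem~\ref{thm:online:w:online} are indeed met.

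The core of the work is the classical online (sub)gradient regret analysis, which I would reproduce in the present notation. Fix a task $t$, a representation $g$, and a comparator $\theta$ with $\|\theta\|\le B$; abbreviate $\ell_i(\cdot)=\ell(h_{(\cdot)}\circ g(x_{t,i}),y_{t,i})$ and let $\nabla_i=\nabla_\theta\ell_i(\theta)\big|_{\theta=\theta_i}$. Convexity of $\theta\mapsto\ell_i(\theta)$ gives $\ell_i(\theta_i)-\ell_i(\theta)\le\langle\nabla_i,\theta_i-\theta\rangle$, while expanding the square in the update $\theta_{i+1}=\theta_i-\zeta\nabla_i$ gives $\langle\nabla_i,\theta_i-\theta\rangle=\frac{1}{2\zeta}\big(\|\theta_i-\theta\|^2-\|\theta_{i+1}-\theta\|^2\big)+\frac{\zeta}{2}\|\nabla_i\|^2$. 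Summing over $i=1,\dots,m_t$, telescoping, discarding the nonpositive tail $-\|\theta_{m_t+1}-\theta\|^2$, and using $\theta_1=0$ (so $\|\theta_1-\theta\|\le B$) together with $\|\nabla_i\|\le L$ (a consequence of the $L$-Lipschitz assumption) yields $\sum_{i=1}^{m_t}\big(\ell_i(\theta_i)-\ell_i(\theta)\big)\le\frac{B^2}{2\zeta}+\frac{\zeta L^2 m_t}{2}$.

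To conclude I would divide this cumulative bound by $m_t$, take the infimum over $\theta$ (equivalently over $h_t\in\mathcal{H}$) on the right-hand side, substitute the prescribed step size $\zeta=B/(L\sqrt{2m_t})$, and bound the resulting expression by $BL\sqrt{2/m_t}$. Since the constants $B$ and $L$ are postulated uniformly over all $(x,y,g)$, this estimate is uniform in $g$ and legitimately serves as the $g$-independent $\beta(g,m_t)$ fed into Theorem~\ref{thm:online:w:online}; plugging it in produces exactly the displayed inequality.

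I do not expect a genuine obstacle here: the argument is a textbook OGA regret bound grafted onto an already-proved meta-theorem. The only delicate point is constant bookkeeping, namely checking that the prescribed step size, inserted into $\frac{B^2}{2\zeta m_t}+\frac{\zeta L^2}{2}$, is comfortably dominated by $BL\sqrt{2/m_t}$ (the factor $\sqrt{2}$ leaves slack, so the bound need not even be the tightest one). Beyond that, the essential structural observation — and the reason the corollary follows so cleanly — is that the OGA rate is independent of the representation $g$, so it commutes with both the expectation $\mathbb{E}_{g\sim\rho}$ and the infimum over $\rho$.
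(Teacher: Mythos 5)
Your proposal is correct and follows exactly the paper's route: instantiate Theorem~\ref{thm:online:w:online} with the $g$-independent within-task bound $\beta(g,m_t)=BL\sqrt{2/m_t}$, which the paper simply cites from \citet[Corollary 2.7]{shalev2011online} while you rederive it via the standard telescoping subgradient argument (your constant bookkeeping checks out, since $\tfrac{B^2}{2\zeta m_t}+\tfrac{\zeta L^2}{2}=\tfrac{3}{2\sqrt{2}}\cdot\tfrac{BL}{\sqrt{m_t}}\le BL\sqrt{2/m_t}$ at the prescribed step size). The only difference is that your write-up is self-contained where the paper defers to a reference.
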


\begin{proof}
 Apply Theorem~\ref{thm:online:w:online} and use the bound
 $\mathcal{R}_t(g) \leq \beta(g,m_t) := BL\sqrt{2/ m_t}$ that can be found,
 for example, in \cite[Corollary 2.7]{shalev2011online}.
\end{proof}
We note that under additional assumptions on loss functions,~\cite[Theorem 1]{hazan2007logarithmic} provides bounds for $\beta(g,m_t)$ that are in $\log(m_t)/m_t$. 

\subsubsection{Exponentially Weighted Aggregation}
\label{exm:ewa}
The second algorithm is based on the EWA procedure on the space ${\cal H} \circ g$ for a prescribed 
representation $g \in {\cal G}$. 
\begin{algorithm}
\caption{EWA}
\begin{description}
\item[Data] A task
$ \mathcal{S}_t = \big((x_{t,1},y_{t,1}),\dots,(x_{t,m_t},y_{t,m_t})\big) $.
\item[Input] Learning rate $\zeta>0$; a prior 
probability distribution $\mu_1$ on $\mathcal{H}$.
\item[Loop] For $i=1,\dots,m_t$,
\begin{description}
\item[i] Predict $\hat{y}_{t,i}^g= \int_{\mathcal{H}}
 h \circ g (x_{t,i}) \mu({\rm d} h)$,
\item[ii] $y_{t,i}$ is revealed, update
$$ \mu_{i+1}({\rm d}h) = \frac{\exp(-\zeta \ell(h \circ g (x_{t,i}),y_{t,i}))
    \mu_{i}({\rm d}h)}{\int \exp(-\zeta \ell(u \circ g (x_{t,i}),y_{t,i}))
    \mu_{i}({\rm d}u)}. $$
\end{description}
\end{description}
\end{algorithm}
Recall that a function $\varphi: {\mathbb R} \rightarrow {\mathbb R}$ is called $\zeta_0$-exp-concave 
if $\exp(-\zeta_0 \varphi)$ is concave.
\begin{cor}
\label{cor:2}
Assume that $\mathcal{H}$ is finite and that there exists $\zeta_0>0$ such that for any $y$, 
the function $\ell(\cdot,y)$ is $\zeta_0$-exp-concave and upper bounded by a constant $C$. 
Then the EWA-LL algorithm using the EWA within task with $\zeta= \zeta_0$ satisfies
 \begin{multline*}
 \frac{1}{T} \sum_{t=1}^T \mathbb{E}_{{\hat g}_t \sim \pi_t}\left[
 \frac{1}{m_t}
 \sum_{i=1}^{m_t} \hat{\ell}_{t,i}
 \right]
 \leq
 \inf_{\rho} \Biggl\{
       \mathbb{E}_{g \sim \rho}\Biggl[
        \frac{1}{T}
       \sum_{t=1}^T
       \inf_{h_t\in\mathcal{H}}
       \frac{1}{m_t}
       \sum_{i=1}^{m_t}
        \ell\big(h_t\circ g(x_{t,i}),y_{t,i}\big)
      \\
      +   \frac{1}{T} \sum_{t=1}^T \frac{ \zeta_0 \log|\mathcal{H}|}{m_t}
       \Biggr]
     + \frac{\eta C^2}{8} + \frac{\mathcal{K}(\rho,\pi_1)}{\eta T}
 \Biggr\}.
 \end{multline*}
\end{cor}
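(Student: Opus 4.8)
The plan is to imitate the proof of Corollary~\ref{cor:1}: the statement is again a direct application of Theorem~\ref{thm:online:w:online}, so the only thing that has to be supplied is a within-task regret bound $\mathcal{R}_t(g)\le\beta(g,m_t)$ valid, for every fixed representation $g\in\mathcal{G}$, when the within-task algorithm is the exponentially weighted aggregation procedure run with learning rate $\zeta=\zeta_0$. Once such a $\beta$ is identified, substituting it into Theorem~\ref{thm:online:w:online} produces the displayed inequality; the boundedness hypothesis $\hat L_t(g)\in[0,C]$ required by that theorem holds automatically here, since $\hat L_t(g)$ is an average of losses $\ell(\cdot,y_{t,i})\le C$.

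First I would fix a representation $g$ and regard $\mathcal{H}\circ g:=\{h\circ g:h\in\mathcal{H}\}$ as a finite family of $|\mathcal{H}|$ experts on the inner sequence $(x_{t,1},y_{t,1}),\dots,(x_{t,m_t},y_{t,m_t})$. The within-task regret $\mathcal{R}_t(g)$ is exactly the normalized regret of the aggregate against the best expert in this family, so the task reduces to the classical analysis of exponential weights under exp-concave losses. The central step is to control the loss of the \emph{mean} prediction $\hat y_{t,i}^g=\int_{\mathcal{H}}h\circ g(x_{t,i})\,\mu_i(\mathrm{d}h)$: since $\ell(\cdot,y_{t,i})$ is $\zeta_0$-exp-concave, the map $u\mapsto\exp(-\zeta_0\ell(u,y_{t,i}))$ is concave, so Jensen's inequality applied to the mixture $\mu_i$ gives $\ell(\hat y_{t,i}^g,y_{t,i})\le-\frac1{\zeta_0}\log\int_{\mathcal{H}}\exp\bigl(-\zeta_0\,\ell(h\circ g(x_{t,i}),y_{t,i})\bigr)\mu_i(\mathrm{d}h)$. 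Summing over $i$ and unrolling the Gibbs update defining $\mu_{i+1}$ from $\mu_i$ makes the successive normalizing constants telescope, leaving $\sum_{i=1}^{m_t}\ell(\hat y_{t,i}^g,y_{t,i})\le-\frac1{\zeta_0}\log\int_{\mathcal{H}}\exp\bigl(-\zeta_0\sum_{i=1}^{m_t}\ell(h\circ g(x_{t,i}),y_{t,i})\bigr)\mu_1(\mathrm{d}h)$. With $\mu_1$ uniform over the $|\mathcal{H}|$ experts, lower bounding the integral by its largest term yields the classical inequality $\sum_{i=1}^{m_t}\ell(\hat y_{t,i}^g,y_{t,i})-\inf_{h\in\mathcal{H}}\sum_{i=1}^{m_t}\ell(h\circ g(x_{t,i}),y_{t,i})\le\frac{\log|\mathcal{H}|}{\zeta_0}$ (see, e.g., \citep{cesa2006prediction}). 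Dividing by $m_t$ gives $\mathcal{R}_t(g)\le\beta(g,m_t)$ with a right-hand side that does not depend on $g$, so $\mathbb{E}_{g\sim\rho}$ of the penalty term is immediate and Theorem~\ref{thm:online:w:online} delivers the claim.

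The application of Theorem~\ref{thm:online:w:online} itself is routine, so the only real obstacle is the within-task bound, and within it the one genuinely delicate point: because the within-task EWA procedure predicts with the \emph{mean} $\int h\circ g(x_{t,i})\,\mu_i(\mathrm{d}h)$ rather than with a randomly drawn $h$, we cannot invoke linearity of the loss in the prediction as one would for a convex Lipschitz loss. Exp-concavity is precisely the hypothesis that repairs this: it is what licenses bounding the loss of the averaged prediction by the logarithm of the averaged exponential loss, and hence the telescoping. I would therefore take care to state the per-step inequality above for the exact predictions produced by the algorithm, and to verify that the normalizing constants cancel along the whole inner loop before concluding.
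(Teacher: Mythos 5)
Your proposal is correct and follows essentially the same route as the paper: the paper's proof is a one-line application of Theorem~\ref{thm:online:w:online} with the within-task regret bound for exponential weights under $\zeta_0$-exp-concave losses, which it cites from \citet[Theorem 2.2]{gerchinovitz2011prediction}, whereas you additionally supply the standard derivation of that bound (Jensen via exp-concavity, telescoping of the normalizing constants, uniform prior). One remark: your derivation yields the per-task regret $\log|\mathcal{H}|/(\zeta_0 m_t)$, which is the classical constant, while the corollary as stated writes $\zeta_0\log|\mathcal{H}|/m_t$; your version is the correct one (the regret of EWA at rate $\zeta_0$ scales as $\log|\mathcal{H}|/\zeta_0$), and the placement of $\zeta_0$ in the paper's statement appears to be a typo rather than a flaw in your argument.
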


\begin{proof}
  Apply Theorem~\ref{thm:online:w:online} and use the bound
 $\mathcal{R}_t(g) \leq \beta(g,m_t) := \zeta_0 \log|\mathcal{H}| /m_t$ that can be found,
 for example, in \cite[Theorem 2.2]{gerchinovitz2011prediction}.
\end{proof}

A typical example is the quadratic loss function $\ell(y',y)=(y'-y)^2$. When there is some $B$ such that 
$|y_{t,i}|\leq B$ and $|h\circ g(x_{t,i})|\leq B$, then the exp-concavity assumption is verified 
with $\zeta_0= 1/(8B)$ and the boundedness assumption with $C = 4B^2$.

Note that when
the exp-concavity assumption does not hold,~\cite{gerchinovitz2011prediction} derives a bound $\beta(g,m_t)=
B \sqrt{\log (|\mathcal{H}|) /(2m_t)}$ with
$\zeta=(2/B) \sqrt{2 \log (|\mathcal{H}|) / m_t}$. 
Moreover, PAC-Bayesian type bounds in various settings (including infinite $\mathcal{H}$) can be found in~\citep{catoni2004statistical,audibert2006randomized,gerchinovitz2013sparsity}. 
We refer the reader to~\citep{gerchinovitz2011prediction} for a comprehensive survey.


\section{APPLICATIONS}
\label{applications}
In this section, we discuss two important applications. To ease our presentation, we assume that all the tasks have the same sample size, that is $m_t=m$ for all $t$.

\subsection{Finite Subset of Relevant Predictors}
\label{subsection-finite}
\label{subsec: finite}

We give details on Example~\ref{exm-rel}, that is we assume that $\mathcal{G}$ 
is a set of $K$ functions. Note that step {\bf iii} in Algorithm \ref{Al: main} boils down to update $K$ weights, 
$$
\pi_t (g_k) = \frac{\exp(- \eta \hat{L}_t (g_k) ) \pi_{t-1}(g_k) }
                 {\sum_{j=1}^K \exp(- \eta \hat{L}_t (g_j) ) \pi_{t-1}(g_j) }.
$$

\begin{thm}
 Under the assumptions of Theorem \ref{thm:online:w:online}, if we set $\eta
 =\frac{2}{C}
 \sqrt{\frac{2 \log K}{T}}$ and $\pi_1$ uniform on $\mathcal{G}$,
 \begin{multline*}
\frac{1}{T} \sum_{t=1}^T \mathbb{E}_{{\hat g}_t \sim \pi_t}\left[
   \frac{1}{m} \sum_{i=1}^{m} \hat{\ell}_{t,i}
 \right]
 \leq
       \min_{1\leq k\leq K} \Bigg\{
       \frac{1}{T}\sum_{t=1}^T
       \inf_{h_t\in\mathcal{H}}
        \frac{1}{m} \sum_{i=1}^{m}
        \ell(h_t\circ g_k(x_{t,i}),y_{t,i})
 \\      +\beta(g_k,m)
       \Bigg\}
       + C\sqrt{\frac{\log K}{2T}}.
 \label{example1}
 \end{multline*}
\end{thm}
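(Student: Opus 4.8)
The plan is to derive this bound directly from Theorem~\ref{thm:online:w:online} by choosing, inside the infimum over $\rho$, a convenient family of candidate measures adapted to the finiteness of $\mathcal{G}$. Since $\mathcal{G}=\{g_1,\dots,g_K\}$, the most natural candidates are the Dirac masses $\rho=\delta_{g_k}$ for $k=1,\dots,K$. Because $\inf_\rho$ in Theorem~\ref{thm:online:w:online} ranges over \emph{all} probability measures, restricting attention to these $K$ point masses can only increase the right-hand side, so it yields a valid upper bound. This is the only ``inequality'' step in the argument; everything else is an exact computation.

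First I would evaluate the general bound at $\rho=\delta_{g_k}$. The expectation $\mathbb{E}_{g\sim\rho}[\cdot]$ then collapses to evaluation at $g_k$, turning the bracketed term into
$$
\frac{1}{T}\sum_{t=1}^T \inf_{h_t\in\mathcal{H}} \frac{1}{m}\sum_{i=1}^{m} \ell\big(h_t\circ g_k(x_{t,i}),y_{t,i}\big) + \beta(g_k,m).
$$
Next I would compute the Kullback--Leibler term. For a uniform prior $\pi_1$ on the $K$ elements, $\pi_1(\{g_k\})=1/K$, and for a point mass against a discrete prior $\mathcal{K}(\delta_{g_k},\pi_1)=\log\big(1/\pi_1(\{g_k\})\big)=\log K$, independent of $k$. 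Taking the infimum over the $K$ choices of $k$ replaces $\inf_\rho$ by $\min_{1\le k\le K}$ in the data-dependent part, while the residual $\tfrac{\eta C^2}{8}+\tfrac{\mathcal{K}(\delta_{g_k},\pi_1)}{\eta T}=\tfrac{\eta C^2}{8}+\tfrac{\log K}{\eta T}$ factors out as an additive constant.

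It then remains to plug in the prescribed learning parameter and simplify the residual. Writing $f(\eta)=\frac{\eta C^2}{8}+\frac{\log K}{\eta T}$, a one-line AM--GM estimate gives $f(\eta)\ge C\sqrt{\tfrac{\log K}{2T}}$ for all $\eta>0$, with equality at $\eta^\star=\frac{2}{C}\sqrt{\tfrac{2\log K}{T}}$; this is precisely the stated choice of $\eta$, so $f(\eta)=C\sqrt{\log K/(2T)}$ exactly. Collecting the terms produces the claimed inequality. I do not anticipate any genuine obstacle here: the result is a clean specialization, and the only point meriting a word of care is that passing from $\inf_\rho$ to $\min_k$ via Dirac masses is an upper bound (which is what we want), together with the observation that the prescribed $\eta$ is exactly the minimizer of the residual rather than an arbitrary admissible value.
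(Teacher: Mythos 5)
Your proposal is correct and follows exactly the paper's own (very terse) proof: take $\rho$ to be the Dirac mass at $g_k$, note $\mathcal{K}(\delta_{g_k},\pi_1)=\log K$ for the uniform prior, and plug in the stated $\eta$ so that $\frac{\eta C^2}{8}+\frac{\log K}{\eta T}=C\sqrt{\log K/(2T)}$. The only difference is that you spell out the AM--GM optimality of $\eta$, which the paper leaves implicit.
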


\begin{proof}
Fix $g\in\mathcal{G}$, $\rho$ as the
Dirac mass on $g$ and note that $\mathcal{K}(\rho,\pi_1)=\log K$.
\end{proof}

We discussed in Sections~\ref{exm:grad} and~\ref{exm:ewa} that typical orders for 
$\beta(g,m)$ are $\mathcal{O}(1/\sqrt{m})$, $\mathcal{O}(\log(m)/m)$ or $\mathcal{O}(1/m)$. We state a precise result in the finite case.

\begin{cor}
Assume that $\mathcal{H}$ is finite, that
for some $\zeta_0>0$, for any $y$, the function $\ell(\cdot,y)$ is $\zeta_0$-exp-concave and upper bounded by a constant $C$. Then the EWA-LL algorithm using the EWA within task with $\zeta= \zeta_0$ satisfies
 \begin{multline*}
 \frac{1}{T} \sum_{t=1}^T \mathbb{E}_{{\hat g}_t \sim \pi_t}\left[
 \frac{1}{m}
 \sum_{i=1}^{m} \hat{\ell}_{t,i}
 \right]
 \leq
 \min_{1\leq k \leq K}
        \frac{1}{T}
       \sum_{t=1}^T
       \min_{h_t\in\mathcal{H}}
       \frac{1}{m}
       \sum_{i=1}^{m}
        \ell(h_t\circ g_k(x_{t,i}),y_{t,i})
      \\
      + \frac{ \zeta_0 \log|\mathcal{H}|}{m}
     + C\sqrt{\frac{\log K}{2T}}.
 \end{multline*}
\end{cor}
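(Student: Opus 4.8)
The plan is to treat this statement as a direct corollary of the finite-$\mathcal{G}$ theorem stated just above, specialized to the exponentially weighted within-task forecaster of Corollary~\ref{cor:2}. No new inequality is needed: the entire argument consists in substituting one regret bound into another and simplifying the result.

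First I would verify that the two cited results apply simultaneously under the present hypotheses. The within-task EWA bound of Corollary~\ref{cor:2} requires $\mathcal{H}$ finite and $\ell(\cdot,y)$ to be $\zeta_0$-exp-concave and bounded by $C$, which is exactly what is assumed; with $\zeta=\zeta_0$ it yields the within-task regret bound $\mathcal{R}_t(g)\leq\beta(g,m)=\zeta_0\log|\mathcal{H}|/m$. The outer (meta) bound, on the other hand, needs $\hat{L}_t(g)\in[0,C]$ in order for the exponentially weighted aggregation over $\mathcal{G}$ to be controlled; this I would obtain from the boundedness of $\ell$ by $C$, since $\hat{L}_t(g)$ is an average of losses each lying in $[0,C]$. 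Thus the same constant $C$ plays both roles and the two results are compatible.

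Next I would invoke the finite-$\mathcal{G}$ theorem above, taking $\pi_1$ uniform on $\mathcal{G}=\{g_1,\dots,g_K\}$ and the tuned value $\eta=\frac{2}{C}\sqrt{2\log K/T}$, whose right-hand side is $\min_{1\le k\le K}\{\frac1T\sum_t \inf_{h_t}\frac1m\sum_i \ell(h_t\circ g_k(x_{t,i}),y_{t,i})+\beta(g_k,m)\}+C\sqrt{\log K/(2T)}$. The only remaining step is to substitute $\beta(g_k,m)=\zeta_0\log|\mathcal{H}|/m$ and to observe that this quantity does not depend on the index $k$; it therefore factors out of the minimum over $k$. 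Since $\mathcal{H}$ is finite I would also replace each $\inf_{h_t}$ by $\min_{h_t}$. Collecting terms produces exactly the claimed inequality.

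There is essentially no hard step here: the only point requiring genuine care is the bookkeeping of the constant $C$ and the confirmation that the boundedness condition $\hat{L}_t(g)\in[0,C]$ demanded by the meta-algorithm is really implied by the within-task boundedness of $\ell$. Once that compatibility is checked, the $k$-independence of $\beta$ makes the factorization immediate, and the proof reduces to the one-line substitution already sketched.
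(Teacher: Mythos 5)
Your proposal is correct and is exactly the argument the paper intends: the corollary is the finite-$\mathcal{G}$ theorem of Section~\ref{subsection-finite} with $\beta(g_k,m)=\zeta_0\log|\mathcal{H}|/m$ substituted from the within-task EWA bound of Corollary~\ref{cor:2}, the $k$-independence of this term letting it be pulled out of the minimum. Your checks (that the same constant $C$ bounds both $\ell$ and hence $\hat{L}_t(g)$, and that $\inf$ over a finite $\mathcal{H}$ is a $\min$) are the right bookkeeping and there is nothing missing.
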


In Section~\ref{online-to-batch}, we derive from Theorem~\ref{thm:online:w:online}
a bound in the batch setting. As we shall see, in the finite case the bound is exactly the same as the bound on the compound regret.
This allows us to compare our results to previous ones obtained in the learning-to-learn
setting. In particular, our $\mathcal{O}(1/m)$ bound improves upon \citep{pentina2014pac} who derived an $\mathcal{O}(1/\sqrt{m})$ bound.

\subsection{Dictionary Learning}

We now give details on Example~\ref{exm-dic} in the 
linear case. Specifically, we let $\mathcal{X}=\mathbb{R}^d$, we let $\mathcal{D}_K$ be the set
formed by all $d\times K$ matrices $D$, whose columns have euclidean norm equal to one, and we define $\mathcal{G}= \{ x\mapsto D x : D \in \mathcal{D}_K \}$.
Within this subsection we assume that the loss $\ell$ is convex and $\Phi$-Lipschitz
with respect to its first argument, that is,
for every $y \in {\cal Y}$ and $a_1,a_2 \in {\mathbb R}$, it holds $|\ell(a_1,y) -\ell(a_2,y)| \leq \Phi |a_1-a_2 |$. 
We also assume that for all $t\in \{1,\dots,T\}$ and $i\in \{1,\dots,m\}$, $ \| x_{t,i}\| \leq 1 $. Assume $\beta(m):=\sup_{g\in\mathcal{G}} \beta(m,g)<+\infty$.

We define the prior $\pi_1$ as follows: the columns of $D$ are
i.i.d., uniformly distributed on the $d$-dimensional unit sphere.

\begin{thm}
\label{coro-dico}
 Under the assumptions of Theorem \ref{thm:online:w:online}, with
 $ \eta =\frac{2}{C} \sqrt{\frac{Kd}{T}} $,
\begin{multline*}
 \frac{1}{T} \sum_{t=1}^T \mathbb{E}_{{\hat g}_t \sim
  \pi_t}\left[ \frac{1}{m} \sum_{i=1}^{m} 
  \hat{\ell}_{t,i} \right]
 \leq
    \inf_{D\in\mathcal{D}_K} \Biggl\{ \frac{1}{T}
\sum_{t=1}^T\inf_{h_t \in\mathcal{H}}
\frac{1}{m}\sum_{i=1}^{m}
 \ell\big(\langle h_t, D x_{t,i}\rangle,y_{t,i}\big)
 \\
 +
\frac{C}{4} \sqrt{\frac{Kd}{T}}( \log(T)+7)
+ \beta(m) \Biggr\}
+ \frac{B\Phi}{\sqrt{T}} \sqrt{ \frac{1}{T}       \sum_{t=1}^T
   \lambda_{\max} \left( \frac{1}{m}       \sum_{i=1}^{m}
   x_{t,i} x_{t,i}^T \right)
  }.
\end{multline*}
\end{thm}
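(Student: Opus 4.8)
The plan is to invoke the general bound of Theorem~\ref{thm:online:w:online} with $m_t=m$ and $\beta(g,m)\le\beta(m)$, and then do all the real work on the right-hand side, where the only freedom left is the choice of the measure $\rho$. Since the prior $\pi_1$ makes the columns of $D$ i.i.d.\ uniform on the sphere $S^{d-1}$, I would fix an \emph{arbitrary} $D^\ast\in\mathcal{D}_K$ with columns $d_1^\ast,\dots,d_K^\ast$ and take $\rho=\rho_{D^\ast}$ to be the product measure whose $k$-th column $u_k$ is uniform on the spherical cap $\{u\in S^{d-1}:\|u-d_k^\ast\|\le\varepsilon\}$, for a radius $\varepsilon$ of order $1/\sqrt{T}$ to be pinned down at the end. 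Plugging this $\rho$ into the infimum only upper bounds it, and at the very end I would take the infimum over the arbitrary $D^\ast\in\mathcal{D}_K$, which produces the outer $\inf_{D\in\mathcal{D}_K}$ of the statement. Two quantities then have to be estimated: the excess in the loss term caused by replacing a random $g\sim\rho$ by $D^\ast$, and the divergence $\mathcal{K}(\rho,\pi_1)$.

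For the loss term, write the prediction associated with dictionary $D$ and weights $h_t$ as the linear form $\langle Dh_t,x\rangle$ in the data, where $Dh_t=\sum_k (h_t)_k u_k\in\mathbb{R}^d$. Let $h_t^\ast$ (with $\|h_t^\ast\|\le B$) attain $\inf_{h_t}\frac1m\sum_i\ell(\langle D^\ast h_t,x_{t,i}\rangle,y_{t,i})$. Bounding $\inf_{h_t}$ by the value at $h_t^\ast$ and using the $\Phi$-Lipschitz assumption gives, for each task and each draw of $g$,
\[
\inf_{h_t}\frac1m\sum_i\ell(\langle Dh_t,x_{t,i}\rangle,y_{t,i})
-\inf_{h_t}\frac1m\sum_i\ell(\langle D^\ast h_t,x_{t,i}\rangle,y_{t,i})
\le \frac{\Phi}{m}\sum_i\bigl|\langle v_t,x_{t,i}\rangle\bigr|,
\]
where $v_t:=(D-D^\ast)h_t^\ast=\sum_k (h_t^\ast)_k(u_k-d_k^\ast)\in\mathbb{R}^d$. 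Jensen and Cauchy--Schwarz turn the right-hand side into the covariance form
\[
\frac{\Phi}{m}\sum_i\bigl|\langle v_t,x_{t,i}\rangle\bigr|
\le \Phi\,\|v_t\|\,\sqrt{\lambda_{\max}\!\Bigl(\tfrac1m\textstyle\sum_i x_{t,i}x_{t,i}^{\!\top}\Bigr)},
\]
which is exactly where the $\lambda_{\max}$ of the statement comes from. I would then bound $\mathbb{E}_\rho\|v_t\|\le\sqrt{\mathbb{E}_\rho\|v_t\|^2}$; since the columns are independent, $\mathbb{E}_\rho\|v_t\|^2=\sum_k (h_t^\ast)_k^2\,\mathbb{E}\|u_k-d_k^\ast\|^2+\sum_{k\ne l}(h_t^\ast)_k(h_t^\ast)_l\langle\mathbb{E}\delta_k,\mathbb{E}\delta_l\rangle$, the diagonal being $\le B^2\varepsilon^2$ while the cross terms are higher order (each $\|\mathbb{E}(u_k-d_k^\ast)\|$ is $O(\varepsilon^2)$ by symmetry of the cap) and are absorbed. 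With $\varepsilon\asymp1/\sqrt{T}$ this yields $\mathbb{E}_\rho\|v_t\|\lesssim B/\sqrt{T}$, and averaging over $t$ with one more Cauchy--Schwarz, $\frac1T\sum_t\sqrt{\lambda_{\max,t}}\le\sqrt{\frac1T\sum_t\lambda_{\max,t}}$, produces precisely the final term $\frac{B\Phi}{\sqrt{T}}\sqrt{\frac1T\sum_t\lambda_{\max}(\cdot)}$.

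For the complexity term, the product structure gives $\mathcal{K}(\rho,\pi_1)=K\,\mathcal{K}(\text{Unif(cap)},\text{Unif}(S^{d-1}))=K\log\frac{1}{\sigma(\mathrm{cap})}$, where $\sigma$ is the normalized surface measure. A standard lower bound on the measure of a cap of radius $\varepsilon\asymp1/\sqrt T$ gives $\log\frac1{\sigma(\mathrm{cap})}\le\frac d2\log T + O(d)$, hence $\mathcal{K}(\rho,\pi_1)\le\frac{Kd}{2}(\log T+\text{const})$. Substituting $\eta=\frac2C\sqrt{Kd/T}$ makes $\frac{\eta C^2}{8}=\frac C4\sqrt{Kd/T}$ and $\frac{\mathcal{K}(\rho,\pi_1)}{\eta T}=\frac{C\,\mathcal{K}(\rho,\pi_1)}{2\sqrt{KdT}}\le\frac C4\sqrt{Kd/T}\,(\log T+\text{const})$; adding the two recovers the stated $\frac C4\sqrt{Kd/T}(\log T+7)$ with the precise constant coming out of the cap-volume estimate. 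Collecting the loss, complexity, and $\beta(m)$ pieces and taking $\inf_{D^\ast\in\mathcal{D}_K}$ finishes the proof.

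I expect the spherical-cap geometry to be the main obstacle: one must choose a \emph{single} radius $\varepsilon$ that simultaneously makes the second moment $\mathbb{E}\|u_k-d_k^\ast\|^2$ small enough to give the $B/\sqrt{T}$ approximation term and makes the cap measure large enough to keep $\mathcal{K}(\rho,\pi_1)$ at $\tfrac{Kd}{2}\log T$, all while tracking constants tightly enough to reproduce the explicit numbers. The subtlety is that $\varepsilon\asymp1/\sqrt T$ must serve both constraints at once, and one must verify that the cross terms in $\mathbb{E}_\rho\|v_t\|^2$ (which involve the non-orthogonality of the $d_k^\ast$) are genuinely lower order and do not reintroduce a spurious factor of $K$; everything else is routine application of Lipschitzness, Cauchy--Schwarz, and the master regret bound.
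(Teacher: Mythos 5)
Your proposal follows essentially the same route as the paper's proof: apply Theorem~\ref{thm:online:w:online}, take $\rho$ to be the prior restricted to a product of spherical caps of radius $c\asymp 1/\sqrt{T}$ around the columns of $D^\ast$, bound $\mathcal{K}(\rho,\pi_1)$ by $Kd\log(1/c)+O(Kd)$ via the cap-volume estimate, and control the loss excess through the $\Phi$-Lipschitz property and Cauchy--Schwarz, which produces the $\lambda_{\max}$ term. The only (harmless) divergence is that you estimate $\mathbb{E}_\rho\|v_t\|$ by a second-moment computation with cross terms, whereas the paper simply bounds $\|(D-D^\ast)^{\top}h_t^\ast\|\leq cB$ deterministically for every $D$ in the support of $\rho_c$, which sidesteps the cap-symmetry argument you flag as the main delicacy.
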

The proof relies on an application of Theorem~\ref{thm:online:w:online}.
The calculations being tedious, we postpone the proof to Appendix~\ref{proofs}.

When we use OGA within tasks, we can use Corollary~\ref{cor:1}
with $L=\Phi \sqrt{K}$ and 
so $ \beta(m) \leq \Phi B
\sqrt{2K/m} $ for any $D\in\mathcal{D}_K$. Moreover,
\begin{equation}
\label{equa-stupid}
\hspace{-.3truecm}\lambda_{\max} \left( \hspace{-.071truecm}\frac{1}{m}       \sum_{i=1}^{m}
 x_{t,i} x_{t,i}^T \hspace{-.071truecm}\right)
\leq {\rm tr} \left(\hspace{-.071truecm} \frac{1}{m}       \sum_{i=1}^{m}
 x_{t,i} x_{t,i}^T \hspace{-.071truecm}\right ) \leq 1
\end{equation}
so Theorem \ref{coro-dico} leads to the following corollary. 

\begin{cor}
 Algorithm EWA-LL for dictionary learning, with
 $ \eta =(2/C)\sqrt{Kd/T} $, and using the OGA algorithm within tasks, with step $\zeta = B/(\Phi\sqrt{2mK})$, satisfies
\begin{multline*}
 \frac{1}{T} \sum_{t=1}^T \mathbb{E}_{{\hat g}_t \sim
  \pi_t}\left[ \frac{1}{m} \sum_{i=1}^{m} 
  \hat{\ell}_{t,i} \right]
  \leq 
   \inf_{D\in\mathcal{D}_K}\frac{1}{T}
\sum_{t=1}^T\inf_{h_t \in\mathcal{H}}
\frac{1}{m}\sum_{i=1}^{m}
 \ell\big(\langle h_t, D x_{t,i}\rangle,y_{t,i}\big)
 \\
 +
\frac{C}{4} \sqrt{\frac{Kd}{T}}( \log(T)+7)
+ \frac{B\Phi }{\sqrt{T}}
+ \frac{\Phi  B\sqrt{2 K}}{\sqrt{m}} .
\end{multline*}
\end{cor}

Note that the upper bound~\eqref{equa-stupid} may be lose. For example, when the $x_{t,i}$ are i.i.d. on the unit sphere, $
\lambda_{\max} \left( \sum_{i=1}^{m}
 x_{t,i} x_{t,i}^T /m \right) $ is close to $1/d $.
 In this case, it is possible to improve the term $\beta(m)$ employed in the calculation of the bound, we postpone the lengthy details to Appendix~\ref{appendix:improvement}.

\subsubsection{Algorithmic Details and Simulations}

We implement our meta-algorithm Randomized EWA in 
this setting. The algorithm used within each task is the 
simple version of the online gradient algorithm outlined in 
Section~\ref{exm:grad}.

In order to draw $\hat{g}_t$ from $\pi_t$, we use $N$-steps of
Metropolis-Hastings algorithm with a normalized 
Gaussian proposal \citep[see, for example,][]{robert2013monte}. 
In order to ensure a short burn-in 
period, we use the previous drawing $\hat{g}_{t-1}$ as
a starting point. The procedure is given in Algorithm~\ref{EWA-LL-d}.
\begin{algorithm}[t]
\caption{EWA-LL for dictionary learning}
\label{EWA-LL-d}
\begin{description}
\item[Data]  As in Algorithm \ref{Al: main}.
\item[Input] A learning rate $\eta$ for EWA and a 
learning rate $\zeta$ for the online gradient. A number 
of steps $N$ for the Metropolis-Hastings algorithm.
\item[Start] Draw 
$\hat{g}_1\sim\pi_1$.
\item[Loop] For $t=1,\dots,T$
\begin{description}
\item[i] Run the within-task learning algorithm $ \mathcal{S}_t $ and suffer
loss $ \hat{L}_t(\hat{g}_t)$.
\item[ii] Set $\tilde{g} := \hat{g}_t$.
\item[iii] Metropolis-Hastings algorithm. Repeat $N$ times \begin{description}
 \item[a] Draw $\tilde{g}' \sim \mathcal{N}(\tilde{g},\sigma^2 I)$ and
 then set $\tilde{g}' := \tilde{g}'/\|\tilde{g}'\|$.
 \item[b] Set $
 \tilde{g} := \tilde{g}'$ with probability
 $$
 \min\left\{1,\exp\left[\eta \sum_{h=1}^t \left( \hat{L}_h (\tilde{g}) -
        \hat{L}_h (\tilde{g}') \right) \right] \right\},
 $$
 $\tilde{g}$ remains unchanged otherwise.
\end{description}
\item[iv] Set $\hat{g}_t := \tilde{g}$.
\end{description}
\end{description}
\end{algorithm}
Note the bottleneck of the algorithm: in step {\bf b} we have to compare
$\tilde{g}$ and $\tilde{g}'$ on the whole dataset so far.

We now present a short simulation study. We generate data in the following
way: we let $K=2$, $d=5$, $T=150$ and $m=100$. The columns
of $D$ are drawn uniformly on the unit sphere,
and task regression vectors $\theta_t$ are also independent and
have i.i.d. coordinates in $\mathcal{U}[-1,1]$.
We generate the datasets $\mathcal{S}_t$ as follows:
all the $x_{t,i}$ are i.i.d. from the same distribution as $\theta_t$,
and
$ y_{t,i} = \langle\theta_t,D x_{t,i} \rangle + \varepsilon_{t,i} $
where the $\varepsilon_{t,i}$ are i.i.d. $\mathcal{N}(0,\sigma^2)$ and
$\sigma=0.1$.

We compare Algorithm~\ref{EWA-LL-d} with $N=10$ to an oracle who knows the representation $D$, but not the task regression vectors $\theta_t$, and learns them using the online gradient algorithm with step size $\zeta=0.1$. Notice that
after each chunk of $100$ observations, a new task starts, so the parameter $\theta_t$ changes. Thus, the oracle incurs
a large loss until it learns the new $\theta_t$ (usually within a few steps). This
explains the ``stair'' shape of the cumulative loss of the oracle in
Figure~\ref{figure-zoom}. Figure~\ref{valeur-borne1} indicates that
after a few tasks, the dictionary $D$ is learnt by EWA-LL: its cumulative loss becomes parallel to the one of the oracle.
Due to the bottleneck mentioned above, the algorithm becomes quite slow to run
when $t$ grows. In order to improve the speed of the algorithm, we also tried Algorithm  \ref{EWA-LL-d} with $N=1$. There is absolutely no theoretical justification for
this, however, obviously the algorithm is 10 times faster. As we can see
on the red line in Figure~\ref{valeur-borne1}, this version of the algorithm
still learns $D$, but it takes more steps.
Note that this is not completely
unexpected: the Markov chain generated by this algorithm is no longer
stationary, but it can still enjoy good mixing properties. It would be
interesting to study the theoretical performance of Algorithm \ref{EWA-LL-d}. However,
this would require considerably technical tools from Markov chain theory which are beyond the scope of this paper.
\begin{figure}[t]
\begin{center}
\includegraphics[scale=0.5]{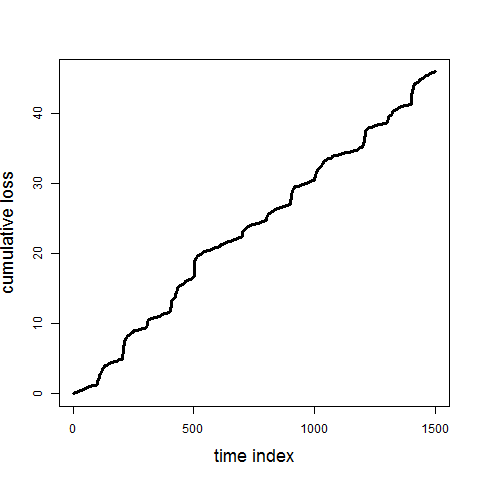}
\end{center}
\caption{The cumulative loss of the oracle for the first $15$ tasks.}
\label{figure-zoom}
\end{figure}
\begin{figure}[t]
\begin{center}
\includegraphics[scale=0.5]{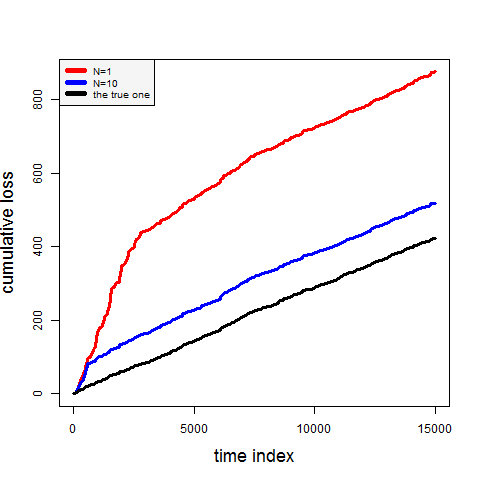}
\end{center}
\caption{Cumulative loss of EWA-LL ($N=1$ in red and
$N=10$ in blue) and cumulative loss of the oracle.}
\label{valeur-borne1}
\end{figure}

\section{UNIFORM BOUNDS}
\label{uniform}
In this section, we show that it possible to obtain a 
uniform bound, as opposed to a bound in expectation 
as in Theorem~\ref{thm:online:w:online}. From a theoretical perspective, the price to pay is very low: we only have to
assume that the loss function is convex with respect to its first
argument. However, in practice, there is an aggregation step
that might not be feasible. This is discussed
at the end of the section. The algorithm is outlined in Algorithm \ref{alg:04}.
\begin{thm}
\label{cor:online:w:online:agg}
Assuming that for any $g,0\leq \hat{L}_t(g) \leq C $ 
and that the algorithm used within-task has a regret $\mathcal{R}_t(g) \leq \beta(g,m_t)$. Assume that 
$\ell$ is convex with respect to its first argument. Then it holds that
 \begin{multline*}
 \frac{1}{T} \sum_{t=1}^T \frac{1}{m_t} \sum_{i=1}^{m_t}
       \ell\left(\hat{y}_{t,i},y_{t,i}\right)
 \leq
 \inf_{\rho} \Bigg\{
       \mathbb{E}_{g \sim \rho}\left[
        \frac{1}{T}   \sum_{t=1}^T
       \inf_{h_t\in\mathcal{H}}
       \frac{1}{m_t}       \sum_{i=1}^{m_t}
        \ell(h_t\circ g(x_{t,i}),y_{t,i})\right.
  \\   
 \left.  +  \frac{1}{T} \sum_{t=1}^T \beta(g,m_t) \right]
     + \frac{\eta C^2}{8} + \frac{\mathcal{K}(\rho,\pi_1)}{\eta T}
 \Bigg\}.
 \end{multline*}
\end{thm}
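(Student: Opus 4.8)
The plan is to exploit the convexity of $\ell$ in order to replace the randomized draw $\hat{g}_t \sim \pi_t$ of Algorithm~\ref{Al: main} by a deterministic \emph{aggregated} prediction, and then to reduce the uniform bound to the bound in expectation already established in Theorem~\ref{thm:online:w:online}. Concretely, the algorithm underlying the uniform bound (Algorithm~\ref{alg:04}) keeps the very same EWA update on the sequence of measures $\pi_t$, but at inner step $i$ of task $t$ it predicts with the aggregated forecast $\hat{y}_{t,i} = \mathbb{E}_{g \sim \pi_t}[\hat{y}_{t,i}^g]$ instead of the single forecast attached to one random draw. Since the update in step~\textbf{iii} depends on $g$ only through the deterministic quantity $\hat{L}_t(g)$, and not on any random draw, the sequence $(\pi_t)_t$ is exactly the same deterministic sequence as in Algorithm~\ref{Al: main}.

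First I would apply Jensen's inequality. Because $\ell$ is convex in its first argument, for every $t$ and $i$,
\[
\ell(\hat{y}_{t,i},y_{t,i})
= \ell\big(\mathbb{E}_{g\sim\pi_t}[\hat{y}_{t,i}^g],y_{t,i}\big)
\leq \mathbb{E}_{g\sim\pi_t}\big[\ell(\hat{y}_{t,i}^g,y_{t,i})\big].
\]
Summing over $i$, dividing by $m_t$, and averaging over $t$, and recalling the definition $\hat{L}_t(g)=\frac{1}{m_t}\sum_{i=1}^{m_t}\ell(\hat{y}_{t,i}^g,y_{t,i})$, the left-hand side of the theorem is turned into
\[
\frac{1}{T}\sum_{t=1}^T \frac{1}{m_t}\sum_{i=1}^{m_t}\ell(\hat{y}_{t,i},y_{t,i})
\leq \frac{1}{T}\sum_{t=1}^T \mathbb{E}_{g\sim\pi_t}\big[\hat{L}_t(g)\big].
\]

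The final step is to observe that the right-hand side above is precisely the quantity bounded in Theorem~\ref{thm:online:w:online}: since each $\pi_t$ is deterministic, one has $\mathbb{E}_{\hat{g}_t\sim\pi_t}[\hat{L}_t(\hat{g}_t)]=\mathbb{E}_{g\sim\pi_t}[\hat{L}_t(g)]$, so the expected-regret bound of Theorem~\ref{thm:online:w:online} applies verbatim and yields the claimed inequality. No new constants or probabilistic arguments are needed: the whole EWA analysis, including the variance term $\eta C^2/8$ and the complexity term $\mathcal{K}(\rho,\pi_1)/(\eta T)$, is already contained in Theorem~\ref{thm:online:w:online}.

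The only real subtlety, and the step I would treat most carefully, is the interchange between aggregation and the within-task dynamics: the aggregated forecast $\hat{y}_{t,i}=\mathbb{E}_{g\sim\pi_t}[\hat{y}_{t,i}^g]$ must be the $\pi_t$-average of the forecasts each within-task run would have produced had it been fed the representation $g$, so that the averaged quantity appearing after Jensen is genuinely the within-task loss $\hat{L}_t(g)$ controlled by $\mathcal{R}_t(g)\le\beta(g,m_t)$. This is exactly the aggregation step flagged as possibly infeasible in practice, but it is harmless for the analysis; everything else is a one-line application of Jensen's inequality followed by an appeal to Theorem~\ref{thm:online:w:online}.
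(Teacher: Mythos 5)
Your proposal is correct and follows exactly the paper's own argument: apply Jensen's inequality to the aggregated forecast $\hat{y}_{t,i}=\int \hat{y}_{t,i}^g\,\pi_t(\mathrm{d}g)$ to bound the loss by $\int \hat{L}_t(g)\,\pi_t(\mathrm{d}g)=\mathbb{E}_{g\sim\pi_t}[\hat{L}_t(g)]$, then invoke Theorem~\ref{thm:online:w:online}. Your additional observation that the sequence $(\pi_t)_t$ is deterministic and identical to that of Algorithm~\ref{Al: main} is accurate and makes the appeal to the expectation bound fully rigorous.
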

\begin{proof}
At each step $t$, the loss suffered by the algorithm is
\begin{multline*}
\frac{1}{m_t} \sum_{i=1}^{m_t}
       \ell\big(\hat{y}_{t,i},y_{t,i}\big) =
\frac{1}{m_t} \sum_{i=1}^{m_t}
       \ell\bigg(\int  \hat{y}_{t,i}^{g} \pi_t({\rm d}g),y_{t,i}\bigg)
 \\
 \leq
\frac{1}{m_t} \sum_{i=1}^{m_t} \int
       \ell\big(\hat{y}_{t,i}^{g},y_{t,i}\big) \pi_t({\rm d} g)
= \int \hat{L}_t(g) \pi_t({\rm d}g)
\end{multline*}
and we can just apply Theorem~\ref{thm:online:w:online}.
\end{proof}

\begin{algorithm}[t]
\caption{Integrated EWA-LL}
\begin{description}
\item[Data and Input] same as in Algorithm \ref{Al: main}.
\item[Loop] For $t=1,\dots,T$
\begin{description}
\item[i] Run the within-task learning algorithm on $ \mathcal{S}_t $ for each
$g\in\mathcal{G}$
and return as predictions:
\begin{equation}
\label{integral}
\hat{y}_{t,i}
 = \int  \hat{y}_{t,i}^{g} \pi_t({\rm d}g).
\end{equation}
\item[ii] Update
 $
 \pi_{t+1}({\rm d}g) := \frac{\exp(-\eta \hat{L}_t(g))
                  \pi_{t}({\rm d}g) }{\int \exp(-\eta \hat{L}_t(\gamma))
                  \pi_{t}({\rm d}\gamma) }.
 $
\end{description}
\end{description}
\label{alg:04}
\end{algorithm}
In practice, for an infinite set $\mathcal{G}$ we are not able to run
simultaneously the within-task algorithm for all $g\in\mathcal{G}$. So,
we cannot compute the prediction~\eqref{integral} exactly. A possible
strategy is to draw $N$ elements of $\mathcal{G}$ i.i.d. from $\pi_t$,
say $\hat{g}_{t}(1),\dots,\hat{g}_{t}(N)$, and to replace~\eqref{integral}
by
$$ \hat{y}_{t,i}^{(N)}
 = \frac{1}{N} \sum_{j=1}^N \hat{y}_{t,i}^{\hat{g}_t(j)}. $$
An application of Hoeffding's inequality shows for any
$\delta>0$, with probability at least $1-\delta$, the bound
in Theorem~\ref{cor:online:w:online:agg} will still hold,
up to an additional term $C\sqrt{\log(T/\delta)/2N}$.

\section{LEARNING-TO-LEARN}
\label{online-to-batch}

In this section, we show how our analysis of lifelong learning can be used to derive bounds for learning-to-learn. In this setting, the tasks and their datasets are generated by first sampling task distributions $P_1,\dots,P_T$ i.i.d. from a``meta-distribution''$Q$, called {\em environment} by \cite{Baxter2000}, and then for each task $t$, a dataset ${\cal S}_t = ((x_{t,1},y_{t,1}),\dots,(x_{t,m},y_{t,m}))$ is sampled i.i.d. from $P_t$. We stress that in this setting, the entire data $(x_{t,i},y_{t,i})_{1\leq i \leq m,1\leq t\leq T}$ is given all at once to the learner. Note that for simplicity, we assumed that all the sample sizes are the same.

We wish to design a strategy which, given a new task $P\sim Q$ and a new sample $(x_1,y_1),\dots,(x_m,y_m)$
i.i.d. from $P$, computes a function $f:\mathcal{X}\rightarrow \mathcal{Y}$, that will predict $y$ well when $(x,y)\sim P$. For this purpose we propose the following strategy:
\begin{enumerate}
 \item Run EWA-LL on $(x_{t,i},y_{t,i})_{1\leq i \leq m,1\leq t\leq T}$. We obtain a sequence of representations 
 $\hat{g}_1,\dots,\hat{g}_T$,
 \vspace{-.1truecm}
 \item Draw uniformly $\mathcal{T}\in\{1,\dots,T\}$ and put $\hat{g} =
 \hat{g}_{\mathcal{T}}$,
 \vspace{-.1truecm}
 \item Run the within task algorithm on the sample
  $(x_i,y_i)_{1\leq i\leq m}$, obtaining a sequence
  $h_{1}^{\hat{g}},\dots,h_{m}^{\hat{g}}$ of functions,
  \vspace{-.1truecm}
 \item Draw uniformly $\mathcal{I}\in\{1,\dots,m\}$ and put $\hat{h} =
 h_{\mathcal{I}}^{\hat{g}} $.
\end{enumerate}
Our next result establishes that the strategy leads indeed to safe predictions.
\begin{thm}
\label{thm:online:to:batch}
Let $\mathbb{E}$ be the expectation over all data pairs $(x_{t,i},y_{t,i})_{1\leq i\leq m}\sim P_t$,
$(P_t)_{1\leq t\leq T}\sim Q$,
$(x_i,y_i)_{1\leq i \leq m} \sim P$, $(x,y)\sim P$, $P\sim Q$
and also over the randomized decisions of the learner
$(\hat{g}_t)_{1\leq t\leq T}$,
$\mathcal{T}$ and $\mathcal{I}$.
Then
\begin{multline*}
 \mathbb{E} [\ell(\hat{h}\circ \hat{g}(x),y)]
 \leq
 \inf_{\rho}
 \Biggl\{
 \mathbb{E}_{g\sim \rho}
 \Biggl[
 \mathbb{E}_{P\sim Q} \inf_{h\in\mathcal{H}}
 \mathbb{E}_{(x,y)\sim P} \Bigl[\ell(h\circ g(x),y)\Bigr]
\\ + \beta(g,m)
 \Biggr]
 + \frac{\eta C^2 }{8} + \frac{\mathcal{K}(\rho,\pi_1)}{\eta T}
 \Biggr\}.
\end{multline*}
\end{thm}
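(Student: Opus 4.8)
The plan is to reduce the statement to the expected regret bound of Theorem~\ref{thm:online:w:online} by a standard online-to-batch conversion, exploiting that in the learning-to-learn setting the data are exchangeable across tasks and that the within-task predictor used at inner step $i$ depends only on the first $i-1$ examples. The two moves are: (i) rewrite the left-hand side $\mathbb{E}[\ell(\hat h\circ\hat g(x),y)]$ as the data-expectation of the compound loss $\frac{1}{T}\sum_t \mathbb{E}_{\hat g_t\sim\pi_t}[\hat L_t(\hat g_t)]$ appearing on the left of Theorem~\ref{thm:online:w:online}; and (ii) take the data-expectation of the right-hand side of that theorem and convert the empirical quantities into their population counterparts.

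For move (i), I would first condition on $\hat g$ and on the index $\mathcal{I}=i$. Since $h^{\hat g}_i$ is produced by the within-task algorithm from $(x_1,y_1),\dots,(x_{i-1},y_{i-1})$ only, it is independent of the fresh pair $(x,y)\sim P$, which is in turn identically distributed to $(x_i,y_i)$. Hence $\mathbb{E}[\ell(h^{\hat g}_i\circ\hat g(x),y)]=\mathbb{E}[\ell(h^{\hat g}_i\circ\hat g(x_i),y_i)]$, the expected within-task loss incurred at step $i$. Averaging over $\mathcal{I}$ uniform in $\{1,\dots,m\}$ turns this into the expected average within-task loss on the fresh sample using representation $\hat g$. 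Next, with $\hat g=\hat g_{\mathcal{T}}$ and $\mathcal{T}$ uniform, I would use that $\hat g_t\sim\pi_t$ is built from $\mathcal{S}_1,\dots,\mathcal{S}_{t-1}$, so it is independent of both $\mathcal{S}_t$ and the fresh sample; since $\mathcal{S}_t$ and the fresh sample are identically distributed (draw $P\sim Q$, then $m$ i.i.d. points), the expected within-task loss on the fresh sample with $\hat g_t$ equals $\mathbb{E}[\hat L_t(\hat g_t)]$. Summing over $t$ gives exactly $\mathbb{E}\big[\frac{1}{T}\sum_t \mathbb{E}_{\hat g_t\sim\pi_t}\hat L_t(\hat g_t)\big]$, the expectation of the left-hand side of Theorem~\ref{thm:online:w:online}.

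For move (ii), Theorem~\ref{thm:online:w:online} holds for every realization of the data, so I take its data-expectation. Since $\mathbb{E}[\inf_\rho(\cdots)]\le \inf_\rho \mathbb{E}[\cdots]$, I may restrict the infimum to data-independent $\rho$ and push the expectation inside; then $\mathbb{E}$ and $\mathbb{E}_{g\sim\rho}$ commute. For a fixed $g$, the per-task term becomes $\mathbb{E}[\inf_{h}\frac{1}{m}\sum_i \ell(h\circ g(x_{t,i}),y_{t,i})]$; conditioning on $P_t$ and using $\mathbb{E}[\inf_h(\cdots)]\le\inf_h\mathbb{E}[\cdots]$ bounds it by $\inf_h\mathbb{E}_{(x,y)\sim P_t}\ell(h\circ g(x),y)$, and integrating over $P_t\sim Q$ yields $\mathbb{E}_{P\sim Q}\inf_h\mathbb{E}_{(x,y)\sim P}\ell(h\circ g(x),y)$, identical for each $t$. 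The term $\frac{1}{T}\sum_t\beta(g,m)$ equals $\beta(g,m)$, and the remaining $\frac{\eta C^2}{8}+\mathcal{K}(\rho,\pi_1)/(\eta T)$ is deterministic. Assembling the pieces gives precisely the claimed bound.

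The routine algebra is harmless; the step that needs the most care, and which I expect to be the main obstacle, is the independence bookkeeping in move (i): verifying that $h^{\hat g}_i$ is measurable with respect to the first $i-1$ fresh examples, that $\hat g_t$ is independent of $\mathcal{S}_t$ and of the fresh task, and that $\mathcal{S}_t$ and the fresh sample are genuinely identically distributed given the environment $Q$. Only once these facts are laid out cleanly do the two ``expectation versus infimum'' swaps and the online-to-batch identity fall into place.
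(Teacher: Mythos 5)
Your proposal is correct and follows essentially the same route as the paper: the paper's proof is exactly your move (i) written out as a chain of expectations (replacing the fresh pair $(x,y)$ by $(x_i,y_i)$ using that $\hat h_i^{\hat g}$ depends only on the first $i-1$ fresh examples, then identifying the fresh task with task $t$ using that $\hat g_t$ depends only on tasks $1,\dots,t-1$), followed by your move (ii) of applying Theorem~\ref{thm:online:w:online} pathwise and pushing the data-expectation through $\inf_\rho$ and $\inf_h$. The independence bookkeeping you flag as the delicate point is precisely what the paper spells out step by step.
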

The proof is given in Appendix~\ref{proofs}. As in Theorem~\ref{thm:online:w:online},
the result is given in expectation with respect to the randomized decisions
of the learner. Assuming that $\ell$ is convex with respect to its first argument,
we can state a similar result for a non-random procedure, as was done in
Section~\ref{uniform}. Details are left to the reader.

\begin{rmk}
In~\citep{Baxter2000,maurer2013sparse,pentina2014pac}, the results on learning-to-learn are given with large probability with respect to the observations $(x_{t,i},y_{t,i})_{1\leq i\leq m,1\leq t\leq T}$, rather than in expectation. Using the machinery in \citep[Lemma 4.1]{cesa2006prediction} we conjecture that it is possible to derive a bound in probability from Theorem~\ref{thm:online:to:batch}.
\end{rmk}


\section{CONCLUDING REMARKS}
\label{conclusion}

We presented a meta-algorithm for lifelong learning and derived 
a fully online analysis of its regret. An important advantage of this algorithm is that it inherits the good properties of any algorithm used to learn within tasks. Furthermore, using online-to-batch conversion techniques, we derived bounds for the related framework of learning-to-learn. 

We discussed the implications of our general regret bounds for two applications: dictionary learning and finite set $\mathcal{G}$ of representations. Further applications of this algorithm which may be studied within our framework are deep neural networks and kernel learning. In the latter case, which has been addressed by ~\cite{PenBenDavid15} in the learning-to-learn setting, $g:\mathcal{X}\rightarrow\mathcal{Z}$ is a feature map to a
reproducing kernel Hilbert space $\mathcal{Z}$, and $h_t(g(x)) = \langle z^{(t)},g(x) \rangle_\mathcal{Z}$. In the former case, $\mathcal{X} = \mathbb{R}^d$ and $g: \mathcal{X} \rightarrow \mathbb{R}^K$ 
is a multilayer network, that is a vector-valued function obtained by 
application of a linear transformation and a nonlinear activation function. 
The predictor $h : \mathbb{R}^K \rightarrow \mathbb{R}$ is typically a linear 
function. The vector-valued function $(h_1 \circ  g,\dots,h_T \circ g))$ models 
a multilayer network with shared hidden weights. This is
discussed in~\citep{MPR2016}, again in the  learning-to-learn setting. 

Perhaps the most fundamental question is to extend our analysis to more computationally efficient algorithms such as approximations of EWA, like Algorithm~\ref{EWA-LL-d}, or fully gradient based algorithms as in~\citep{ruvolo2013ella}.



\addcontentsline{toc}{section}{References}

\renewcommand{\refname}{References}


\newpage

\appendix

\section{Proofs}
\label{proofs}

\begin{proof}[Proof of Theorem~\ref{thm:online:w:online}]
It is enough to show that the EWA strategy leads to
\begin{equation}
\label{lemma-pacbayes}
 \sum_{t=1}^T \mathbb{E}_{{\hat g}_t \sim \pi_t}[\hat{L}_t({\hat g}_t)]
 \leq
 \inf_{\rho} \Biggl\{
       \mathbb{E}_{g \sim \rho}\left[ \sum_{t=1}^T \hat{L}_t(g)\right]
       + \frac{\eta C^2 T}{8} + \frac{\mathcal{K}(\rho,\pi_1)}{\eta}
 \Biggr\}.
\end{equation}
Once this is done, we only have to use the assumption that the regret
of the within-task algorithm on task $t$ is upper bounded by $\beta(g,m_t)$ to obtain that
$$
\sum_{t=1}^T \hat{L}_t(g)
 = \sum_{t=1}^T \frac{1}{m_t} \sum_{i=1}^{m_t} \ell\big(
h_{t,i}^g \circ g(x_{t,i}),y_{t,i} \big)
\leq \sum_{t=1}^T \Biggl\{\beta(g,m_t)
+ \inf_{h\in\mathcal{H}} \frac{1}{m_t} \sum_{i=1}^{m_t} \ell\big(
h \circ g(x_{t,i}),y_{t,i} \big) \Biggr\}
$$
and we obtain the statement of the result. 

It remains to prove~\eqref{lemma-pacbayes}. To this end, we follows the same guidelines as in the proof of Theorem 1
in~\citep{audibert2006randomized}. First, note that
\begin{equation}
\label{renorm}
\pi_t(g) = \frac{\exp\left[-\eta \sum_{u=1}^{t-1} \hat{L}_u (g) \right]
         \pi_1({\rm d}g) }{ \int \exp\left[-\eta \sum_{u=1}^{t-1} \hat{L}_u
         (\gamma) \right] \pi_1({\rm d}\gamma)} 
  = \frac{\exp\left[-\eta \sum_{u=1}^{t-1} \hat{L}_u (g) \right]
         \pi_1({\rm d}g) }{W_t}
\end{equation}
where we introduce the notation $W_t$ for the sake of shortness.
Put $E_t = \int \hat{L}_t (g) \pi_t({\rm d} g)
= \mathbb{E}_{{\hat g}_t \sim \pi_t} [\hat{L}_t (g)]$.
Using Hoeffding's inequality on the bounded random
variable $\hat{L}_t (g)\in[0,C]$ we have, for any $t$, that
$$
 \mathbb{E}_{{\hat g}_t \sim \pi_t} \left[  \exp\left\{ 
   \eta (E_t-\hat{L}_t(g))
   \right\} \right] =
  \int 
 \exp\left\{ 
   \eta (E_t-\hat{L}_t(g))
   \right\} \pi_t({\rm d} g) \leq \exp\left\{ \frac{ C^2 \eta^2}{8}
 \right\}
$$
which can be rewritten as
\begin{equation}
 \label{hoeffding}
\exp\left\{ - \eta \mathbb{E}_{g_t \sim 
\pi_t}[\hat{L}_t(g_t)]
 \right\} 
   \geq 
   \exp\left( -\frac{C^2 \eta^2}{8} \right)
 \mathbb{E}_{{\hat g}_t \sim \pi_t} \left\{ \exp\left[ -
   \eta \hat{L}_t(g_t)
 \right] \right\}.
\end{equation}
Next, we note that
\begin{align*}
 \exp\left\{ - \eta \sum_{t=1}^T \mathbb{E}_{{\hat g}_t \sim \pi_t}[\hat{L}_t(g_t)]
 \right\}
  & = \prod_{t=1}^T  \exp\left\{ - \eta \mathbb{E}_{g_t \sim 
\pi_t}[\hat{L}_t(g_t)]
 \right\} \\
  & \geq  \exp\left(- \frac{T C^2 \eta^2}{8} \right) \prod_{t=1}^T 
 \mathbb{E}_{{\hat g}_t \sim \pi_t} \left\{ \exp\left[ -
   \eta \hat{L}_t(g_t)
 \right] \right\} \\
 & \quad  \quad \text{ (using~\eqref{hoeffding})}\\
 & =    \exp\left\{ -\frac{T C^2 \eta^2}{8}
 \right\}  \prod_{t=1}^T \int 
 \exp\left\{ -
   \eta \hat{L}_t(g)
   \right\} \pi_t({\rm d} g)
  \\
 & = \exp\left\{- \frac{T C^2 \eta^2}{8}
 \right\}  \prod_{t=1}^T \int 
 \frac{\exp\left\{ -
   \eta \sum_{u=1}^t \hat{L}_u(g)
   \right\}}{W_{t}} \pi_1({\rm d} g) \\
  & \quad \quad \text{ (using~\eqref{renorm})}
 \\
 & = \exp\left\{ -\frac{T C^2 \eta^2}{8}
 \right\}  \prod_{T=1}^T \frac{W_{t+1}}{W_{t}}
 = \exp\left\{ \frac{T C^2 \eta^2}{8}
 \right\} W_{T+1}.
\end{align*}
So
\begin{align*}
 \sum_{t=1}^T \mathbb{E}_{{\hat g}_t \sim \pi_t}[\hat{L}_t(g_t)]
 & \leq  - \frac{\log W_{T+1}}{ \eta}
     + \frac{T C^2 \eta}{8}
 \\
 & = - \frac{\log \int \exp\left[-\eta \sum_{t=1}^{T} \hat{L}_t
         (g) \right] \pi_1({\rm d}g)}{ \eta}
     + \frac{T C^2 \eta}{8}
\end{align*}
and finally we use \cite[Equation (5.2.1)]{catoni2004statistical}
which states that
$$
 - \frac{\log \int \exp\left[-\eta \sum_{t=1}^{T} \hat{L}_t
         (g) \right] \pi_1({\rm d}g)}{ \eta}
 = \inf_{\rho} \left\{ \mathbb{E}_{g\sim \rho}\left[ \sum_{t=1}^{T} \hat{L}_t
         (g) \right] + \frac{\mathcal{K}(\rho,\pi_1)}{\eta} \right\}.
$$
\end{proof}

\begin{proof}[Proof of Theorem~\ref{coro-dico}]
Let $D^*$ denote a minimizer to the optimization problem
$$
\min_{D\in\mathcal{D}_K}\frac{1}{T}
\sum_{t=1}^T\inf_{h_t \in\mathcal{H}}
\frac{1}{m}\sum_{i=1}^{m}
 \ell (\langle h_t, D x_{t,i}\rangle,y_{t,i}).
$$
We apply Theorem~\ref{thm:online:w:online} and 
upper bound the infimum with respect to any $\rho$ by 
an infimum with respect to $\rho$ in the following 
parametric family 
\begin{align*}
\rho_{c}({\rm d} D)
\propto \mathbf{1}\{ \forall j=1,\ldots,K:\|D_{\cdot,j}-D_{\cdot,j}^*\| \leq c\}\pi_1(dD).
\end{align*}
where $c$ is a positive parameter. Note that when $c$ is small, $\rho_c$ highly concentrates around $D^*$, but we will show this is at a price of an increase in ${\cal K}(\rho_c,\pi_1)$. The proof then proceeds in optimizing with respect to $c$.

We have that
\begin{multline*}
 \frac{1}{T} \sum_{t=1}^T \mathbb{E}_{{\hat g}_t \sim \pi_t}\left[
 \frac{1}{m}
 \sum_{i=1}^{m} \hat{\ell}_{t,i}
 \right]
 \\
 \leq
 \inf_{c} \Bigg\{
       \mathbb{E}_{D \sim \rho_{c}}\bigg[
        \frac{1}{T}       \sum_{t=1}^T
       \inf_{h_t\in\mathcal{H}}
       \frac{1}{m}       \sum_{i=1}^{m}
        \ell(\langle h_t, D x_{t,i}\rangle,y_{t,i})
+  \beta(m)
       \bigg]
     + \frac{\eta C^2}{8} + \frac{\mathcal{K}(\rho_{c},\pi_1)}{\eta T}
 \Bigg\}.
 \end{multline*}
Now, we have 
$$\mathcal{K}(\rho_{c},\pi_1) 
= - \log \pi_1({\{ \forall j=1,\ldots,K:\|D_{\cdot,j}-D_{\cdot,j}^*\| \leq c\}}),$$
and
\begin{align*}
 \pi_1(\{ \forall j=1,\ldots,K: \|D_{\cdot,j}-D_{\cdot,j}^*\| \leq c\}) 
 & \geq   
   \prod_{j=1}^K \left( \dfrac{\pi^{(d-1)/2}
 (c/2)^{d-1} }{ \Gamma(\frac{d-1}{2}+1)}     \Bigg/  
   \dfrac{2 \pi^{(d+1)/2}}{\Gamma(\frac{d+1}{2})} \right)  \\
   & \geq 
  \prod_{j=1}^K \left(  \dfrac{c^{d-1}}{2^{d}\pi} \right) 
\end{align*}
where the first inequality follows by observing that, since $\pi_1 $ is the 
uniform distribution on the unit $d$-sphere, the probability to 
be calculated is greater or equal to the ration between the volume of the 
$(d-1)$-ball with radius $c/2 $ and the surface area of the
unit $d$-sphere. So we get
 \begin{align*}
\mathcal{K}(\rho_{c},\pi_1) 
 \leq  Kd \log(1/c) + 3Kd.
\end{align*}
Furthermore, using the notation
\begin{align*}
h_t^* &:= {\rm arg} \inf_{h_t \in\mathcal{H}}
\frac{1}{m}\sum_{i=1}^{m}
 \ell \big(\langle h_t, D^* x_{t,i}\rangle,y_{t,i}\big)  ,
\end{align*}
we get
\begin{multline*}
  \inf_{h_t \in\mathcal{H}}
\frac{1}{m}\sum_{i=1}^{m}
 \ell \big(\langle h_t, D x_{t,i}\rangle,y_{t,i}\big)
 -
\frac{1}{m}\sum_{i=1}^{m}
 \ell\big( \langle h^*_t, D^* x_{t,i}\rangle,y_{t,i}\big)
 \\
\leq  
 \frac{1}{m}\sum_{i=1}^{m}
 \ell\big(\langle h^*_t, D x_{t,i}\rangle,y_{t,i}\big)
 -
\frac{1}{m}\sum_{i=1}^{m}
 \ell\big( \langle h^*_t, D^* x_{t,i}\rangle,y_{t,i}\big).
\end{multline*}
Under the condition on the loss, we have
$$
\Big| \ell(\langle h^*_t, D x_{t,i}\rangle,y_{t,i})
 -
 \ell\big( \langle h^*_t, D^* x_{t,i}\rangle,y_{t,i}\big) \Big|
\leq
 \Phi\,   \Big| \left\langle h^*_t, (D-D^*) x_{t,i}\right\rangle\Big|
$$
where $\|\cdot\|_F$ denotes the Frobenius norm.
We obtain an upper-bound
\begin{multline*}
 \mathbb{E}_{D \sim \rho_{c}}
        \frac{1}{T}       \sum_{t=1}^T
       \inf_{h_t\in\mathcal{H}}
       \frac{1}{m}       \sum_{i=1}^{m}
        \ell(\langle h_t, D x_{t,i}\rangle,y_{t,i})
 \\
 \leq
\inf_{D\in\mathcal{D}_K} \Biggl\{\frac{1}{T}
\sum_{t=1}^T\inf_{h_t \in\mathcal{H}}
\frac{1}{m}\sum_{i=1}^{m}
 \ell(\langle h_t, D x_{t,i}\rangle,y_{t,i})
 + \frac{1}{T}       \sum_{t=1}^T
 \frac{1}{m}       \sum_{i=1}^{m}
 \Phi\,   | \left\langle h^*_t, (D-D^*) x_{t,i}\right\rangle|
 \Biggr\}.
\end{multline*}
But then note that
\begin{align*}
 \frac{1}{T}       \sum_{t=1}^T
& \frac{1}{m}       \sum_{i=1}^{m}
 \Phi\,   | \left\langle h^*_t, (D-D^*) x_{t,i}\right\rangle|
 \\
 & =  \frac{1}{T}       \sum_{t=1}^T
 \frac{1}{m}       \sum_{i=1}^{m}
 \Phi\,   \sqrt{ \left\langle h^*_t, (D-D^*) x_{t,i}\right\rangle^2 }
 \\
 & \leq
 \Phi \sqrt{ \frac{1}{T}       \sum_{t=1}^T
 \frac{1}{m}       \sum_{i=1}^{m}
  \left\langle h^*_t, (D-D^*) x_{t,i}\right\rangle^2 } \text{ (Jensen)}
 \\
 & =
  \Phi \sqrt{ \frac{1}{T}       \sum_{t=1}^T
   (h^*_t)^T (D-D^*) \left( \frac{1}{m}       \sum_{i=1}^{m}
   x_{t,i} x_{t,i}^T \right) (D-D^*)^T h^*_t
  }
  \\
 & \leq
  \Phi \sqrt{ \frac{1}{T}       \sum_{t=1}^T
   \lambda_{\max} \left( \frac{1}{m}       \sum_{i=1}^{m}
   x_{t,i} x_{t,i}^T \right) \| (D-D^*)^T h^*_t \|^2
  }
    \\
 & \leq
  \Phi c B \sqrt{ \frac{1}{T}       \sum_{t=1}^T
   \lambda_{\max} \left( \frac{1}{m}       \sum_{i=1}^{m}
   x_{t,i} x_{t,i}^T \right)
  }.
\end{align*}
So
Theorem~\ref{oracle-type} leads to
\begin{multline*}
 \frac{1}{T} \sum_{t=1}^T \mathbb{E}_{g_t \sim
  \pi_t}\left[ \frac{1}{m} \sum_{i=1}^{m} 
  \hat{\ell}_{t,i} \right]
  -
  \inf_{D\in\mathcal{D}_K}\frac{1}{T}
\sum_{t=1}^T\inf_{h_t \in\mathcal{H}}
\frac{1}{m}\sum_{i=1}^{m}
 \ell(\langle h_t, D x_{t,i}\rangle,y_{t,i})
 \\
 \leq
 \inf_{c} \left\{
c \Phi  B \sqrt{ \frac{1}{T}       \sum_{t=1}^T
   \lambda_{\max} \left( \frac{1}{m}       \sum_{i=1}^{m}
   x_{t,i} x_{t,i}^T \right)
  } +  \frac{ Kd }{\eta T} \log(1/c) 
  \right\} +  \frac{ 3Kd }{\eta T}
+ \beta(m) + \frac{\eta C^2}{8} .
\end{multline*}
The choices $c=\sqrt{\frac{1}{T}}$ and 
$\eta = \frac{2}{C}\sqrt{\frac{Kd}{T}} $ lead to the result.
\end{proof}

\begin{proof}[Proof of Theorem~\ref{thm:online:to:batch}]
The proof relies on an application of the well-known online-to-batch
trick, discussed pedagogically in Section 5 page 186
in~\cite{shalev2011online}. Still, it is very cumbersome, and it is
easy to get confused. For these reasons, we think it is important
to write it completely. We use the following notation for any
random variable $V$, $\mathbb{E}_V $ is the expectation with respect to
$V$. This is very important as the online-to-batch trick relies essentially
on inverting the order of the random variables in the integration.
We have:

\begin{align*}
& \mathbb{E}  [\ell(\hat{h}\circ \hat{g}(x),y)]
 \\
 &
 =
 \mathbb{E}_{\mathcal{T}}
\mathbb{E}_{\mathcal{I}}
\mathbb{E}_{P_1,\dots,P_T}
\mathbb{E}_{(x_{j,i},y_{j,i})_{j\leq T,i\leq m}}
\mathbb{E}_{P} \mathbb{E}_{(x_s,y_s)_{s\leq m}}
\mathbb{E}_{(x,y)}
[\ell(\hat{h}\circ \hat{g}(x),y)]
\\
&
=  \frac{1}{T}\sum_{t=1}^T
  \frac{1}{m}\sum_{i=1}^m
  \mathbb{E}_{P_1,\dots,P_T}
\mathbb{E}_{(x_{j,i},y_{j,i})_{j\leq T,i\leq m}}
\mathbb{E}_{P} \mathbb{E}_{(x_s,y_s)_{s\leq m}}
\mathbb{E}_{(x,y)}
[\ell(\hat{h}_i^{\hat{g}_t} \circ \hat{g}_t(x),y)]
\\
&
=  \frac{1}{T}\sum_{t=1}^T
  \mathbb{E}_{P_1,\dots,P_T}
\mathbb{E}_{(x_{j,i},y_{j,i})_{j\leq T,i\leq m}}
\mathbb{E}_{P}
\frac{1}{m}\sum_{i=1}^m
 \mathbb{E}_{(x_s,y_s)_{s\leq i-1}}
\mathbb{E}_{(x,y)}
[\ell(\hat{h}_i^{\hat{g}_t} \circ \hat{g}_t(x),y)]
\\
&
=  \frac{1}{T}\sum_{t=1}^T
  \mathbb{E}_{P_1,\dots,P_T}
\mathbb{E}_{(x_{j,i},y_{j,i})_{j\leq T,i\leq m}}
\mathbb{E}_{P}
\frac{1}{m}\sum_{i=1}^m
 \mathbb{E}_{(x_s,y_s)_{s\leq i-1}}
\mathbb{E}_{(x_i,y_i)}
[\ell(\hat{h}_i^{\hat{g}_t} \circ \hat{g}_t(x_i),y_i)]
\\
&
=  \frac{1}{T}\sum_{t=1}^T
  \mathbb{E}_{P_1,\dots,P_T}
\mathbb{E}_{(x_{j,i},y_{j,i})_{j\leq T,i\leq m}}
\mathbb{E}_{P}
\frac{1}{m}\sum_{i=1}^m
 \mathbb{E}_{(x_s,y_s)_{s\leq m}}
[\ell(\hat{h}_i^{\hat{g}_t} \circ \hat{g}_t(x_i),y_i)]
\\
&
=  \frac{1}{T}\sum_{t=1}^T
  \mathbb{E}_{P_1,\dots,P_T}
\mathbb{E}_{(x_{j,i},y_{j,i})_{j\leq T,i\leq m}}
\mathbb{E}_{P}
 \mathbb{E}_{(x_s,y_s)_{s\leq m}}
\left[
\frac{1}{m}\sum_{i=1}^m
\ell(\hat{h}_i^{\hat{g}_t} \circ \hat{g}_t(x_i),y_i)\right]
\\
&
=  \frac{1}{T}\sum_{t=1}^T
  \mathbb{E}_{P_1,\dots,P_{t-1}}
\mathbb{E}_{(x_{j,i},y_{j,i})_{j\leq t-1,i\leq m}}
\mathbb{E}_{P}
 \mathbb{E}_{(x_s,y_s)_{s\leq m}}
\left[
\frac{1}{m}\sum_{i=1}^m
\ell(\hat{h}_i^{\hat{g}_t} \circ \hat{g}_t(x_i),y_i)\right]
\\
&
=  \frac{1}{T}\sum_{t=1}^T
  \mathbb{E}_{P_1,\dots,P_{t-1}}
\mathbb{E}_{(x_{j,i},y_{j,i})_{j\leq t-1,i\leq m}}
\mathbb{E}_{P_t}
 \mathbb{E}_{(x_s,y_s)_{s\leq m}}
\left[
\frac{1}{m}\sum_{i=1}^m
\ell(\hat{h}_i^{\hat{g}_t} \circ \hat{g}_t(x_{t,i}),y_{t,i})\right]
\\
&
=  \frac{1}{T}\sum_{t=1}^T
  \mathbb{E}_{P_1,\dots,P_T}
  \mathbb{E}_{(x_{j,i},y_{j,i})_{j\leq t,i\leq m}}
  \left[
\frac{1}{m}\sum_{i=1}^m
\ell(\hat{h}_i^{\hat{g}_t} \circ \hat{g}_t(x_{t,i}),y_{t,i})\right]
\\
&
=
  \mathbb{E}_{P_1,\dots,P_T}
  \mathbb{E}_{(x_{j,i},y_{j,i})_{j\leq t,i\leq m}}
  \left[
\frac{1}{T}\sum_{t=1}^T
\frac{1}{m}\sum_{i=1}^m
\ell(\hat{h}_i^{\hat{g}_t} \circ \hat{g}_t(x_{t,i}),y_{t,i})\right]
\\
&
\leq
  \mathbb{E}_{P_1,\dots,P_T}
  \mathbb{E}_{(x_{j,i},y_{j,i})_{j\leq T,i\leq m}}
  \inf_{\rho}
  \Biggl\{
    \mathbb{E}_{g \sim \rho}\Biggl[
        \frac{1}{T}
       \sum_{t=1}^T
       \inf_{h_t\in\mathcal{H}}
       \frac{1}{m}
       \sum_{i=1}^{m}
        \ell(h_t\circ g(x_{t,i}),y_{t,i})
        \\
        & \quad \quad
      +  \frac{1}{T} \sum_{t=1}^T \beta(g,m)
       \Biggr]
     + \frac{\eta C^2}{8} + \frac{\mathcal{K}(\rho,\pi_1)}{\eta T}
  \Biggr\} \text{, using Theorem~\ref{thm:online:w:online},}
\\
&
\leq
  \inf_{\rho}
  \Biggl\{
    \mathbb{E}_{g \sim \rho}\Biggl[
     \mathbb{E}_{P\sim Q}
       \inf_{h_t\in\mathcal{H}}
     \mathbb{E}_{(x,y)\sim P}
        \ell(h_t\circ g(x),y)
      +   \beta(g,m)
       \Biggr]
     + \frac{\eta C^2}{8} + \frac{\mathcal{K}(\rho,\pi_1)}{\eta T}
  \Biggr\}.
\end{align*}
\end{proof}

\section{Better Bounds for Dictionary Learning}
\label{appendix:improvement}

We now state a refined version of the bounds for dictionary learning in Section~\ref{applications}. 
As pointed out in that section, while in general the bound
$$ \lambda_{\max} \left( \frac{1}{m}  \sum_{i=1}^{m} x_{t,i} x_{t,i}^T\right) \leq 1 $$
is unimprovable, if the input vectors $x_{t,i}$ are i.i.d. random variables from uniform distribution on the unit sphere, then
$$ \frac{1}{m}  \sum_{i=1}^{m} x_{t,i} x_{t,i}^T \xrightarrow[m\rightarrow \infty]{a.s.} {\rm Cov}(x_{t,i},x_{t,i}) = \frac{1}{d}I $$
where $I$ is the identity matrix. Consequently,
$$ \lambda_{\max} \left( \frac{1}{m}  \sum_{i=1}^{m} x_{t,i} x_{t,i}^T\right)
 \xrightarrow[m\rightarrow \infty]{a.s.} \frac{1}{d} .$$
We can take advantage of this fact in order to improve the term $\beta(m)=\sup_{g\in\mathcal{G}} \beta(g,m)$, but only if we assume that we know in advance that $\lambda_{\max} \left( \sum_{i=1}^{m} x_{t,i} x_{t,i}^T / m\right)$ is not too large.
This is the meaning of the following theorem.
\begin{thm}
  Assume that we know in advance that for all $t\in\{1,\dots,T\}$,
  $$
  \lambda_{\max} \left( \frac{1}{m}  \sum_{i=1}^{m} x_{t,i} x_{t,i}^T\right)
  \leq \Lambda
  $$
 for some $\Lambda>0$.
 Assume the same assumptions as in Theorem~\ref{coro-dico}, still with
 $ \eta =\frac{2}{C} \sqrt{\frac{Kd}{T}} $. Use within tasks Algorithm~\ref{OGA} (online gradient) with a fixed gradient step $\zeta = B / (L \sqrt{2 m K \Lambda })$.
 Then we have
\begin{multline*}
 \frac{1}{T} \sum_{t=1}^T \mathbb{E}_{g_t \sim
  \pi_t}\left[ \frac{1}{m} \sum_{i=1}^{m} 
  \hat{\ell}_{t,i} \right]
  -  \inf_{g\in\mathcal{G}}\frac{1}{T}
\sum_{t=1}^T\inf_{h_t \in\mathcal{H}}
\frac{1}{m}\sum_{i=1}^{m}
 \ell\big(\langle h_t, g x_{t,i}\rangle,y_{t,i}\big)
 \\
 \leq
\frac{C}{4} \sqrt{\frac{Kd}{T}}( \log(T)+7)
+ \frac{2 B L \sqrt{2 K \Lambda }}{\sqrt{m}}
+ \frac{B\Phi \sqrt{\Lambda}}{\sqrt{T}}
  .
\end{multline*}
\end{thm}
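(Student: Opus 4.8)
The plan is to obtain this refined bound by specializing Theorem~\ref{coro-dico} to the regime where the empirical second-moment matrices are spectrally controlled by $\Lambda$, and simultaneously sharpening the within-task regret term $\beta(m)$ by exploiting that same control. First I would apply Theorem~\ref{coro-dico} as a black box. Since the two terms $\frac{C}{4}\sqrt{Kd/T}(\log T+7)$ and $\beta(m)$ do not depend on $D$, they may be pulled outside the infimum over $D\in\mathcal{D}_K$, so that after subtracting the oracle term the only remaining $D$-free contributions are the prior-concentration term (unchanged), the within-task term $\beta(m)$, and the spectral term. The spectral term is disposed of immediately by the hypothesis: since $\lambda_{\max}\big(\frac1m\sum_i x_{t,i}x_{t,i}^T\big)\le\Lambda$ for every $t$,
$$\frac{B\Phi}{\sqrt{T}}\sqrt{\frac{1}{T}\sum_{t=1}^T\lambda_{\max}\Big(\tfrac1m\sum_{i=1}^m x_{t,i}x_{t,i}^T\Big)}\le\frac{B\Phi\sqrt{\Lambda}}{\sqrt{T}},$$
which is precisely the last term of the claimed inequality.

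The substance of the argument is the improvement of $\beta(m)$, which I would carry out by revisiting the within-task analysis of OGA (Algorithm~\ref{OGA}) rather than plugging in the crude constant $L=\Phi\sqrt{K}$ used to derive Corollary~\ref{cor:1}. The subgradient of $\theta\mapsto\ell(\langle\theta,Dx_{t,i}\rangle,y_{t,i})$ is $\ell'(\cdot)\,Dx_{t,i}$, so by $\Phi$-Lipschitzness $\|\nabla_i\|\le\Phi\|Dx_{t,i}\|$, and the standard fixed-step online-gradient regret inequality bounds the cumulative within-task regret by $\frac{B^2}{2\zeta}+\frac{\zeta}{2}\sum_{i=1}^m\|\nabla_i\|^2$. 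The key point is that the summed squared-gradient norm need not be worst-cased step by step but can be controlled in aggregate:
$$\sum_{i=1}^m\|Dx_{t,i}\|^2=m\,{\rm tr}\Big(D^T\big(\tfrac1m\textstyle\sum_{i=1}^m x_{t,i}x_{t,i}^T\big)D\Big)\le m\,\lambda_{\max}\Big(\tfrac1m\textstyle\sum_{i=1}^m x_{t,i}x_{t,i}^T\Big)\,\|D\|_F^2\le m\Lambda K,$$
where the middle step uses ${\rm tr}(AB)\le\lambda_{\max}(B)\,{\rm tr}(A)$ for positive semidefinite $A,B$, and $\|D\|_F^2=K$ because the columns of $D\in\mathcal{D}_K$ have unit norm. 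Hence $\frac1m\sum_i\|\nabla_i\|^2\le\Phi^2K\Lambda$: the effective Lipschitz constant is $\Phi\sqrt{K\Lambda}$ in place of $\Phi\sqrt{K}$, and crucially this holds uniformly over $D\in\mathcal{D}_K$, so the resulting $\beta(m)=\sup_{g}\beta(g,m)$ is finite as required by Theorem~\ref{coro-dico}.

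It then remains to substitute the prescribed step $\zeta=B/(L\sqrt{2mK\Lambda})$ (with $L=\Phi$ the loss-Lipschitz constant) into the normalized regret $\frac{B^2}{2\zeta m}+\frac{\zeta}{2}\Phi^2K\Lambda$; a direct computation gives a within-task regret of order $B\Phi\sqrt{K\Lambda/m}$, and absorbing the numerical constants yields $\beta(m)\le 2BL\sqrt{2K\Lambda/m}$, the second term of the bound. Collecting the three identified contributions completes the proof. I expect the only delicate point to be the aggregate gradient estimate, namely the trace/eigenvalue inequality together with the dimension bookkeeping for $\|Dx_{t,i}\|^2$ and the identity $\|D\|_F^2=K$; everything downstream of $\sum_i\|Dx_{t,i}\|^2\le m\Lambda K$ is routine substitution into Theorem~\ref{coro-dico} and elementary optimization of the step size.
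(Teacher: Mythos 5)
Your proposal is correct and follows essentially the same route as the paper's proof: apply Theorem~\ref{coro-dico}, bound the spectral term directly by $B\Phi\sqrt{\Lambda}/\sqrt{T}$ using the hypothesis, and sharpen $\beta(m)$ via the fixed-step OGA regret bound combined with the aggregate gradient estimate $\sum_{i=1}^m\|Dx_{t,i}\|^2\le mK\Lambda$ (which the paper writes row-by-row with unit-norm columns rather than in trace form, but the computation is the same). The identification $L=\Phi$ for the loss-Lipschitz constant and the substitution of the prescribed step size also match the paper's argument.
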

In particular, note that when $\Lambda = 1/d$ the bound becomes
$$
\frac{C}{4} \sqrt{\frac{Kd}{T}}( \log(T)+7)
+ \frac{2 B L \sqrt{2 K }}{\sqrt{m d}}
+ \frac{B\Phi}{\sqrt{d T}}\,.
$$

\begin{proof}
 We apply Theorem~\ref{coro-dico}, so we only have to upper bound the term $\beta(g,m)$ for the online gradient algorithm with the prescribed step size. Note that in~\citep[Corollary 2.7][]{shalev2011online} we actually have the following regret bound for Algorithm~\ref{OGA} with fixed step size $\eta>0$:
 $$
 \beta(g,m) = \frac{B^2}{2\eta m} + \frac{\eta}{m} \sum_{i=1}^m
   \| \nabla_{\theta=\theta_t} \ell(\langle\theta,g x_{t,i}\rangle,y_{t,i}) \|^2.
 $$
 By the $L$-Lipschitz assumption on $\ell$, $
   \| \nabla_{\theta=\theta_t} \ell(\langle\theta_t,g x_{t,i}\rangle,y_{t,i}) \|^2
   \leq L^2 \| g x_{t,i} \|^2$.
 So we have
 \begin{align*}
  \sum_{t=1}^m
   \| \nabla_{\theta=\theta_t} \ell(\langle\theta,g x_{t,i}\rangle,y_{t,i}) \|^2
   & \leq L^2 \sum_{i=1}^m \| g x_{t,i} \|^2
   = L^2 \sum_{i=1}^m \sum_{k=1}^K \langle g_{k,\cdot},x_{t,i} \rangle^2
   \\
 &  \leq  L^2 \sum_{i=1}^m \sum_{k=1}^K  g_{k,\cdot}^T x_{t,i} x_{t,i}^T g_{k\cdot}
 \\
 & \leq m  L^2  \sum_{k=1}^K g_{k,\cdot}^T \left( \frac{1}{m}\sum_{i=1}^m
     x_{t,i} x_{t,i}^T \right) g_{k\cdot}
     \\
 & \leq m K L^2 \lambda_{\max} \left( \frac{1}{m}\sum_{i=1}^m
     x_{t,i} x_{t,i}^T \right) \|g_{k\cdot}\|^2
 \leq m K L^2 \Lambda.
 \end{align*}
 Consequently,  $
 \beta(m) = \sup_{g} \beta(g,m) \leq B^2/(2\eta m) + \eta K L^2 \Lambda$ and
 The choice $\eta \leq B / (L \sqrt{2 m K \Lambda }) $ leads to
 $$ \beta(m) = 2 B L \sqrt{2  K \Lambda /m}. $$
\end{proof}

\section{Batch-Within-Online Lifelong Learning}
\label{appendix-bwo}

In this last section of the appendix, we present an alternative approach
for the batch-within-online setting discussed in Section~\ref{notations}.
In this setting, the tasks are presented sequentially, but, for each task $t \in \{1,\dots,T\}$ the dataset ${\cal S}_t$ 
 is presented all at once and we assume it is obtained i.i.d. from a distribution $P_t$. Unlike to the reasoning in Section~\ref{online-to-batch}, where we assumed that the $P_t$ were i.i.d. from a distribution $Q$, here we make no assumptions on the generation process underlying the $P_t$'s, which may even be adversarial chosen.

Let us recap the setting. At each time $t\in\{1,\dots,T\}$, a task is presented to the
learner in the following manner:
\begin{enumerate}
 \item nature choses $P_t$ , no assumption is made on this choice. This $P_t$
 is not revealed to the forecaster.
 \item nature draws the sample $ \mathcal{S}_t =
 \big((x_{t,1},y_{t,1}),\dots,(x_{t,m_t},y_{t,m_t})]$
 i.i.d. from $P_t$, and this sample is revealed to the forecaster.
\item based on her/his current guess $\tilde{g}_t$ of $g$ and on the sample
$\mathcal{S}_t$, the forecaster has to run her/his favourite learning algorithm $\hat{h}$ on $(\tilde{g}_t,\mathcal{S}_t)$ to get an estimate
 $ \tilde{h}_{t} = \hat{h}(\tilde{g}_t,\mathcal{S}_t) $
 based on an algorithm of his choice.
 Note that the forecaster
 observes $\tilde{r}_t := r_{t}(\tilde{h}_t \circ \tilde{g}_t)$
 where
 $$ r_t(f) = \frac{1}{m_t}\sum_{i=1}^{m_t} \ell\big(f(x_{t,i}),y_{t,i}\big).
 $$
 \item the forecaster incur the loss $
 R_t(\tilde{h}_t\circ \tilde{g}_t)$ where
 $$ R_t (f) = \mathbb{E}_{(x,y)\sim P_t} \big[\ell\big(f(x),y\big)\big] .$$
 Unfortunately, this quantity is not known to the forecaster.
\end{enumerate}

At the end of time, we are interested in a strategy such that
the compound regret
$$ \mathcal{R} := \frac{1}{T}\sum_{t=1}^T R_t(\tilde{h}_t\circ \tilde{g}_t)
   - \inf_{g\in\mathcal{G}} \frac{1}{T}\sum_{t=1}^T \inf_{h_t\in\mathcal{H}}
   R_t(h_t \circ g)
   $$
is controled.
The situation is similar to the setting discussed in the core of the paper:
we will propose an EWA algorithm for transfer learning, EWA-TL,
for which the regret will be controlled, on the condition that
the learner chooses a suitable within task algorithm. In the online
case, the within tasks algorithm was either EWA or OGA.
In Subsection~\ref{wta} we discuss briefly the within task algorithm.
In Subsection~\ref{tla} we present the EWA-TL algorithm and its theoretical
analysis.

\subsection{Within-task Algorithms}
\label{wta}
We make an additional assumption, that is that the estimator $\hat{h}$ satisfies
a bound in probability:
\begin{multline}
\label{assumption}
 \mathbb{P}
\Biggl[
\forall g\in\mathcal{G},
| r(\hat{h}(g,\mathcal{S}_t)\circ g) - R_t(\hat{h}(g,\mathcal{S}_t)\circ g) |
 \leq \delta(g,m_t,\varepsilon)
 \\
 \text{and }
 \\
| R_t(\hat{h}(g,\mathcal{S}_t)\circ g) - \inf_{h\in\mathcal{H}} R_t(h\circ g) |
\leq 2 \delta(g,m_t,\varepsilon)
 \Biggr] \geq 1-\varepsilon. 
\end{multline}

 In classification, when $\ell$ is the 0-1 loss function, and for any
 $g$, the family $\{h\circ g,h\in\mathcal{H}\}$ has a Vapnik-Chervonenkis 
dimension
 bounded by $V$, then the empirical risk minimizer (ERM)
 $$ \hat{h}(g,\mathcal{S}_t) = \arg\min_{h\in\mathcal{H}} r_t(h\circ g) $$
 satisfies the above condition with
 $$ \delta(g,m_t,\varepsilon) = 2\sqrt{2 \frac{V\log\left(\frac{2m_t {\rm e} }
 {V}\right) 
  + \log\left(\frac{4}{\varepsilon}\right)}{m_t}} ,$$
  see e.g.~\citep[Chapter 4, page 94][]{VC}.
Similar rates can be obtained with PAC-Bayesian 
bounds~\citep{mcallester1998some,catoni2004statistical}, but we postpone the details
to future work.

\subsection{EWA-TL}
\label{tla}

\begin{algorithm}[H]
\caption{EWA-TL}
\begin{description}
\item[Data] A sequence of datasets
\\ $ \mathcal{S}_t = \big((x_{t,1},y_{t,1}),\dots,(x_{t,m_t},y_{t,m_t})\big) $,
$1\leq t\leq T$, associated with different learning tasks; the datasets are revealed sequentially, but the points within each dataset $\mathcal{S}_t$ are revealed all at once.
\item[Input] A prior $\pi_1$, a
learning parameter $\eta>0$ and a learning algorithm $\hat{h}$
which satisfies~\eqref{assumption}.
\item[Loop] For $t=1,\dots,T$
\begin{description}
\item[i] Draw $\hat{g}_t \sim \pi_t$.
\item[ii] Run the within-task learning algorithm $\hat{t}$ on
$ \mathcal{S}_t $ to get $\tilde{h}_t = \hat{h}(\hat{g}_t,\mathcal{S}_t) $.
\item[iii] Update
 $$
 \pi_{t+1}({\rm d}g)
\propto \exp\Biggl\{-\eta
\Bigl[
r_t(\hat{h}(\mathcal{S}_t,g)\circ g)
+ \delta(g,m_t,\varepsilon/T)
\Bigr]
\Biggr\}
\pi_{t-1}({\rm d}g).
 $$
\end{description}
\end{description}
\label{Al:main}
\end{algorithm}

We now provide a bound on the regret of EWA-TL.
\begin{thm}
\label{thm:batch:w:online}
Under~\eqref{assumption}, and assuming that there is a constant $C$ such
that $0\leq 
r_t(\hat{h}(\mathcal{S}_t,g)\circ g)+ \delta(g,m_t,\varepsilon/T) \leq C$,
with probability at least $1-\varepsilon$,
\begin{multline*}
\sum_{t=1}^T \mathbb{E}_{\tilde{g}_t \sim \pi_{t-1}}
\Bigl[
R_t(\tilde{h}_t\circ \tilde{g}_t)]
\Bigr]
 \leq
      \inf_{\rho} \Biggl\{     \mathbb{E}_{g \sim \rho} \left[\frac{1}{T}
      \sum_{t=1}^T
    \inf_{h\in\mathcal{H}} R_t(h\circ g)
    + \frac{4}{T}\sum_{t=1}^T \delta(g,m_t,\varepsilon/T)
    \right]
    \\
   + \frac{\eta  C^2}{8} + \frac{\mathcal{K}(\rho,\pi_1)}{\eta T} \Biggr\}.
\end{multline*}
\end{thm}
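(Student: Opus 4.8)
The plan is to reduce everything to the deterministic exponentially weighted aggregation (EWA) oracle inequality already established inside the proof of Theorem~\ref{thm:online:w:online}, but applied not to the true losses and instead to the observable surrogate that actually drives the update of $\pi_t$ in Algorithm~\ref{Al:main}, namely
$$
\tilde{r}_t(g) := r_t(\hat{h}(\mathcal{S}_t,g)\circ g) + \delta(g,m_t,\varepsilon/T).
$$
First I would observe that the core step \eqref{lemma-pacbayes} is purely deterministic in the data: it uses only Hoeffding's inequality for the bounded integrand $g\mapsto \tilde{r}_t(g)\in[0,C]$ (bounded by the hypothesis of the theorem) against the probability measure $\pi_t$, and never refers to how the losses were generated. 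Hence, pathwise in the observations and with the indexing of Algorithm~\ref{Al:main} (the shift between $\pi_t$ and $\pi_{t-1}$ is purely notational),
$$
\sum_{t=1}^T \mathbb{E}_{\tilde{g}_t\sim\pi_t}[\tilde{r}_t(\tilde{g}_t)]
\leq \inf_\rho\Bigl\{\mathbb{E}_{g\sim\rho}\Bigl[\sum_{t=1}^T \tilde{r}_t(g)\Bigr] + \frac{\eta C^2 T}{8} + \frac{\mathcal{K}(\rho,\pi_1)}{\eta}\Bigr\}.
$$

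Next I would bring in the statistical assumption \eqref{assumption}, invoked at confidence level $\varepsilon/T$ for each of the $T$ tasks; a union bound then makes the favourable event $\bigcap_{t=1}^T A_t$ hold with probability at least $1-\varepsilon$, and the rest of the argument is carried out on this event. The two halves of \eqref{assumption} are used in opposite directions. For the left-hand side, the first inequality gives, uniformly in $g$, $R_t(\hat{h}(\mathcal{S}_t,g)\circ g)\leq r_t(\hat{h}(\mathcal{S}_t,g)\circ g)+\delta(g,m_t,\varepsilon/T)=\tilde{r}_t(g)$; being uniform in $g$ it survives integration against the data-dependent measure $\pi_t$, so $\mathbb{E}_{\tilde{g}_t}[R_t(\tilde{h}_t\circ\tilde{g}_t)]\leq \mathbb{E}_{\tilde{g}_t}[\tilde{r}_t(\tilde{g}_t)]$. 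For the right-hand side I would chain both parts of \eqref{assumption}, namely $r_t(\hat{h}(\mathcal{S}_t,g)\circ g)\leq R_t(\hat{h}(\mathcal{S}_t,g)\circ g)+\delta$ followed by $R_t(\hat{h}(\mathcal{S}_t,g)\circ g)\leq \inf_h R_t(h\circ g)+2\delta$, together with the extra $\delta$ carried in the definition of $\tilde{r}_t$, to obtain $\tilde{r}_t(g)\leq \inf_{h\in\mathcal{H}}R_t(h\circ g)+4\delta(g,m_t,\varepsilon/T)$. This bookkeeping ($1+1+2$ copies of $\delta$) is exactly where the factor $4$ in the statement comes from.

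Combining these three ingredients on $\bigcap_t A_t$ and normalizing by $T$ (following the same $1/T$ convention as in Theorem~\ref{thm:online:w:online} for both the oracle and remainder terms) yields the announced bound. The main obstacle is the careful handling of the randomness: the surrogate $\tilde{r}_t$ is itself a random function of $\mathcal{S}_t$, and the aggregating measure $\pi_t$ depends on all data seen so far, so a merely pointwise-in-$g$ concentration bound would be useless for controlling $\mathbb{E}_{\tilde{g}_t\sim\pi_t}[\cdot]$. It is precisely the uniformity over $g\in\mathcal{G}$ built into \eqref{assumption} that allows the inequalities to pass through this data-dependent integration. Keeping the deterministic EWA step cleanly separated from the probabilistic concentration step, and allocating confidence $\varepsilon/T$ per task so the union bound closes, is the delicate point of the proof.
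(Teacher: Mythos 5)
Your proposal is correct and follows essentially the same route as the paper's own (sketch of) proof: apply the deterministic EWA oracle inequality \eqref{lemma-pacbayes} to the observable surrogate $r_t(\hat{h}(\mathcal{S}_t,g)\circ g)+\delta(g,m_t,\varepsilon/T)$, then on the intersection of the $T$ events from \eqref{assumption} (each at level $\varepsilon/T$, closed by a union bound) use the first half of the assumption to lower-bound the left side by $R_t$ and chain both halves on the right side to reach $\inf_h R_t(h\circ g)+4\delta$, with the $1+1+2$ accounting of the factor $4$ exactly as in the paper. Your explicit remarks on the union bound and on why uniformity in $g$ is needed to integrate against the data-dependent $\pi_t$ are points the paper leaves implicit, but they do not change the argument.
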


\begin{proof}[Sketch of the proof]
First, follow the proof of Theorem~\ref{thm:online:w:online} to get:
\begin{multline*}
\sum_{t=1}^T \mathbb{E}_{\tilde{g}_t \sim \pi_{t-1}}
\Bigl[
r_t(\tilde{h}_t\circ \tilde{g}_t)]
+  \delta(\tilde{g}_t,m_t,\varepsilon/T)
\Bigr]
  \leq
    \inf_{\rho} \Biggl\{ \sum_{t=1}^T
    \mathbb{E}_{g \sim \rho}
    \Bigl[
    r_t(\tilde{h}_t\circ g)
    + \delta(g,m_t,\varepsilon/T)
    \Bigr]
   \\
   + \frac{\eta T C^2}{8} + \frac{\mathcal{K}(\rho,\pi)}{\eta} \Biggr\}.
\end{multline*}
So, with probability at least $1-\varepsilon$,
\begin{align*}
 \sum_{t=1}^T & \mathbb{E}_{\tilde{g}_t \sim \pi_{t-1}}
\Bigl[
R_t(\tilde{h}_t\circ \tilde{g}_t)]
\Bigr]
\\
& \leq
\sum_{t=1}^T \mathbb{E}_{\tilde{g}_t \sim \pi_{t-1}}
\Bigl[
r_t(\tilde{h}_t\circ \tilde{g}_t)]
+  \delta(\tilde{g}_t,m_t,\varepsilon/T)
\Bigr]
  \\
&\leq
    \inf_{\rho} \left\{ \sum_{t=1}^T
    \mathbb{E}_{g \sim \rho}
    \Bigl[
    r_t(\tilde{h}_t\circ g)
    + \delta(g,m_t,\varepsilon/T)
    \Bigr]
   + \frac{\eta T C^2}{8} + \frac{\mathcal{K}(\rho,\pi_1)}{\eta} \right\}
  \\
&\leq
      \inf_{\rho} \left\{ \sum_{t=1}^T
    \mathbb{E}_{g \sim \rho}
    \Bigl[
    R_t(\hat{h}_t(g,\mathcal{S}_t)\circ g)
    + 2 \delta(g,m_t,\varepsilon/T)
    \Bigr]\
   + \frac{\eta T C^2}{8} + \frac{\mathcal{K}(\rho,\pi_1)}{\eta} \right\}
 \\
 & \leq \inf_{\rho} \left\{     \mathbb{E}_{g \sim \rho} \left[ \sum_{t=1}^T
    \inf_{h\in\mathcal{H}} R_t(h\circ g)
    + 4\sum_{t=1}^T \delta(g,m_t,\varepsilon/T)
    \right]
   + \frac{\eta T C^2}{8} + \frac{\mathcal{K}(\rho,\pi_1)}{\eta} \right\}.
\end{align*}
\end{proof}



\begin{thebibliography}{100}
\labelwidth=4em
\addtolength\leftskip{0pt}
\setlength\labelsep{5pt}
\addtolength\parskip{\smallskipamount}


\bibitem[Audibert, 2006]{audibert2006randomized}
Audibert, J.-Y. (2006).
\newblock A randomized online learning algorithm for better variance control.
\newblock In {\em Proc. 19th Annual Conference on Learning Theory}, pages
  392--407. Springer.

\bibitem[Balcan et~al., 2015]{Balcan}
Balcan, M.-F., Blum, A., and Vempala, S. (2015).
\newblock Efficient representations for lifelong learning and autoencoding.
\newblock In {\em Proc. 28th Conference on Learning Theory}, pages 191--210.

\bibitem[Baxter, 1997]{Baxter97}
Baxter, J. (1997).
\newblock A bayesian/information theoretic model of learning to learn via
  multiple task sampling.
\newblock {\em Machine Learning}, 28(1):7--39.

\bibitem[Baxter, 2000]{Baxter2000}
Baxter, J. (2000).
\newblock A model of inductive bias learning.
\newblock {\em Journal of Artificial Intelligence Research}, 12:149--198.

\bibitem[Catoni, 2004]{catoni2004statistical}
Catoni, O. (2004).
\newblock {\em Statistical learning theory and stochastic optimization}, volume
  1851 of {\em Saint-Flour Summer School on Probability Theory 2001 (Jean
  Picard ed.), Lecture Notes in Mathematics}.
\newblock Springer-Verlag, Berlin.

\bibitem[Cavallanti et~al., 2010]{Cavallanti}
Cavallanti, G., Cesa-Bianchi, N., and Gentile, C. (2010).
\newblock Linear algorithms for online multitask classification.
\newblock {\em Journal of Machine Learning Research}, 1:2901--2934.

\bibitem[Cesa-Bianchi and Lugosi, 2006]{cesa2006prediction}
Cesa-Bianchi, N. and Lugosi, G. (2006).
\newblock {\em Prediction, learning, and games}.
\newblock Cambridge University Press.

\bibitem[Crammer and Mansour, 2012]{Crammer}
Crammer, K. and Mansour, Y. (2012).
\newblock Learning multiple tasks using shared hypotheses.
\newblock In {\em Advances in Neural Information Processing Systems 25}, pages
  1475--1483.

\bibitem[Gerchinovitz, 2011]{gerchinovitz2011prediction}
Gerchinovitz, S. (2011).
\newblock {\em Pr{\'e}diction de suites individuelles et cadre statistique
  classique: {\'e}tude de quelques liens autour de la r{\'e}gression
  parcimonieuse et des techniques d'agr{\'e}gation}.
\newblock PhD thesis, Paris 11.

\bibitem[Gerchinovitz, 2013]{gerchinovitz2013sparsity}
Gerchinovitz, S. (2013).
\newblock Sparsity regret bounds for individual sequences in online linear
  regression.
\newblock {\em Journal of Machine Learning Research}, 14(1):729--769.

\bibitem[Hazan et~al., 2007]{hazan2007logarithmic}
Hazan, E., Agarwal, A., and Kale, S. (2007).
\newblock Logarithmic regret algorithms for online convex optimization.
\newblock {\em Machine Learning}, 69(2-3):169--192.

\bibitem[Maurer, 2005]{Maurer05}
Maurer, A. (2005).
\newblock Algorithmic stability and meta-learning.
\newblock {\em Journal of Machine Learning Research}, 6:967--994.

\bibitem[Maurer et~al., 2013]{maurer2013sparse}
Maurer, A., Pontil, M., and Romera-Paredes, B. (2013).
\newblock Sparse coding for multitask and transfer learning.
\newblock In {\em Proc. 30th International Conference on Machine Learning},
  pages 343--351.

\bibitem[Maurer et~al., 2016]{MPR2016}
Maurer, A., Pontil, M., and Romera-Paredes, B. (2016).
\newblock The benefit of multitask representation learning.
\newblock {\em Journal of Machine Learning Research}, 17(81):1--32.

\bibitem[McAllester, 1998]{mcallester1998some}
McAllester, D.~A. (1998).
\newblock Some pac-bayesian theorems.
\newblock In {\em Proc. 11th Annual Conference on Computational Learning
  Theory}, pages 230--234. ACM.

\bibitem[Pentina and Ben-David, 2015]{PenBenDavid15}
Pentina, A. and Ben-David, S. (2015).
\newblock Multi-task and lifelong learning of kernels.
\newblock In {\em Proc. 26th International Conference on Algorithmic Learning
  Theory}, pages 194--208.

\bibitem[Pentina and Lampert, 2014]{pentina2014pac}
Pentina, A. and Lampert, C. (2014).
\newblock A pac-bayesian bound for lifelong learning.
\newblock In {\em Proc. 31st International Conference on Machine Learning},
  pages 991--999.

\bibitem[Robert and Casella, 2013]{robert2013monte}
Robert, C. and Casella, G. (2013).
\newblock {\em Monte Carlo statistical methods}.
\newblock Springer Science \& Business Media.

\bibitem[Ruvolo and Eaton, 2013]{ruvolo2013ella}
Ruvolo, P. and Eaton, E. (2013).
\newblock Ella: An efficient lifelong learning algorithm.
\newblock In {\em Proc. 30th International Conference on Machine Learning},
  pages 507--515.

\bibitem[Shalev-Shwartz, 2011]{shalev2011online}
Shalev-Shwartz, S. (2011).
\newblock Online learning and online convex optimization.
\newblock {\em Foundations and Trends in Machine Learning}, 4(2):107--194.

\bibitem[Thrun, 1996]{thrun96}
Thrun, S. (1996).
\newblock Is learning the n-th thing any easier than learning the first?
\newblock In {\em Advances in neural information processing systems}, pages
  640--646.

\bibitem[Thrun and Pratt, 1998]{thrun}
Thrun, S. and Pratt, L. (1998).
\newblock {\em Learning to Learn}.
\newblock Kluwer Academic Publishers.

\bibitem[Vapnik, 1998]{VC}
Vapnik, V. (1998).
\newblock {\em Statistical Learning Theory}.
\newblock Wiley.

\end{thebibliography}
\end{document}